\documentclass{article}

\usepackage{microtype}
\usepackage{graphicx}
\usepackage{subcaption}
\usepackage{float}
\usepackage{booktabs} 
\usepackage{multirow}

\usepackage{hyperref}

\usepackage[accepted]{icml2026}

\usepackage{amsmath}
\usepackage{amssymb}
\usepackage{mathtools}
\usepackage{amsthm}

\usepackage[capitalize,noabbrev]{cleveref}

\theoremstyle{plain}
\newtheorem{theorem}{Theorem}[section]
\newtheorem{proposition}[theorem]{Proposition}
\newtheorem{lemma}[theorem]{Lemma}

\theoremstyle{definition}

\theoremstyle{remark}
\newtheorem{remark}[theorem]{Remark}

\usepackage[textsize=tiny]{todonotes}

\icmltitlerunning{Algorithmic Recourse of In-Context Learning for Tabular Data}

\begin{document}

\twocolumn[
  \icmltitle{Algorithmic Recourse of In-Context Learning for Tabular Data}



  \icmlsetsymbol{equal}{*}

  \begin{icmlauthorlist}
    \icmlauthor{Wenshuo Dong}{equal,kaust,mbz,ucph}
    \icmlauthor{Jiaming Zhang}{equal,kaust,renmin}
    \icmlauthor{Shaopeng Fu}{kaust}
    \icmlauthor{Hongbin Lin}{hkustgz}
    \icmlauthor{Di Wang}{kaust}
    \icmlauthor{Lijie Hu}{mbz}
  \end{icmlauthorlist}

  \icmlaffiliation{mbz}{Mohamed bin Zayed University of Artificial Intelligence, United Arab Emirates}
  \icmlaffiliation{ucph}{University of Copenhagen, Denmark}
  \icmlaffiliation{renmin}{Renmin University of China, China}
  \icmlaffiliation{kaust}{King Abdullah University of Science and Technology, Saudi Arabia}  
  \icmlaffiliation{hkustgz}{The Hong Kong University of Science and Technology(Guangzhou), China}

  \icmlcorrespondingauthor{Di Wang}{di.wang@kaust.edu.sa}
  \icmlkeywords{Machine Learning, ICML}

  \vskip 0.3in
]



\printAffiliationsAndNotice{\icmlEqualContribution}

\begin{abstract}
As predictive models are increasingly deployed in high-stakes settings such as credit approval, there is a growing need for post-hoc methods that provide recourse to affected individuals. Many such models operate on tabular data, where features correspond to real-world attributes. Recently, in-context learning (ICL) has enabled large language models to perform tabular prediction by conditioning on labeled examples at inference time, without explicit training. However, algorithmic recourse for tabular decision-making under ICL remains largely unexplored. In this work, we present the first study of algorithmic recourse for tabular data under ICL. We carry out a theoretical analysis, showing that recourse remains well-defined and bounded, and we characterize how recourse converges toward classical solutions as the context size increases. In practice, we propose a novel zeroth-order recourse framework, Adaptive Subspace Recourse for In-Context Learning (ASR-ICL), that efficiently generates actionable and sparse recourse for black-box ICL models. The proposed framework naturally extends to multi-class tabular tasks. Experiments across multiple real-world datasets and models demonstrate that ASR-ICL achieves recourse quality comparable to existing methods with fewer queries and empirically confirm the predicted convergence behavior, supporting our theoretical analysis.
\end{abstract}

\section{Introduction}

Machine learning models are increasingly deployed in high-stakes decision-making domains such as credit lending and criminal justice \cite{barocas2016big}.
Consequently, there is a growing emphasis on providing algorithmic recourse to individuals who are adversely impacted by predictions, offering actionable guidance on how to obtain a favorable outcome \cite{voigt2017eu}.
For example, when a loan application is rejected by an automated decision system, algorithmic recourse can suggest feasible actions, such as improving income stability or reducing outstanding debt, that may lead to a different outcome.
Algorithmic recourse is an explanation framework that provides individuals with actionable and feasible feature changes that could lead to a favorable decision outcome \cite{ustun2019actionable}.
Unlike post-hoc explanations that describe why a decision was made, recourse focuses on actionable changes that an individual can implement to obtain a different outcome \cite{wachter2017counterfactual}.
As a result, algorithmic recourse has become a central component of responsible and trustworthy machine learning.

Many high-stakes decision systems operate on tabular data, where features correspond to semantically meaningful and potentially actionable real-world attributes such as income, employment status, or medical indicators \cite{kleinberg2018human,pawelczyk2020learning}. This makes tabular prediction an important setting for algorithmic recourse since recourse feature changes can often be interpreted as feasible actions. Recently, in-context learning (ICL) with large language models (LLMs) has emerged as a powerful paradigm for tabular prediction \cite{wen2025scalable}. Models such as TabPFN and TabICL demonstrate that strong tabular predictors can be obtained without explicit training by conditioning on a set of labeled examples provided at inference time \cite{hollmann2025tabpfn,qu2025tabicl}. In this paradigm, predictions are dynamically induced from the context rather than produced by a fixed trained model. As a result, the effective decision rule varies across contexts, rendering the predictor inherently black-box and input-dependent.

Many prior works have successfully applied algorithmic recourse to trained predictive models using various optimization techniques, such as gradient-based methods, integer programming, and causal modeling \cite{karimi2021algorithmic,ustun2019actionable}.
These approaches typically assume a fixed decision function obtained after training, under which recourse is defined and computed \cite{mothilal2020explaining, ustun2019actionable, poyiadzi2020face,pawelczyk2020learning}.
In contrast, in-context learning induces predictions dynamically from the provided context; thus, the effective decision rule can vary across contexts even for the same individual \cite{brown2020language}.  
The effective models faced by the user is context-conditioned: changing the context can change both the prediction and the recourse action, even for the same query instance. Therefore, recourse under ICL raises new questions that do not arise in the classical fixed-model setting, including whether recourse is well-defined for a given context, whether it remains stable across contexts, and whether it can be computed efficiently with only black-box access. Despite the growing adoption of ICL for tabular prediction, the implications of such dynamic decision rules for algorithmic recourse remain largely unexplored.

In this work, we present the first systematic study of algorithmic recourse under in-context learning for tabular data, addressing a critical gap between recent advances in tabular ICL and the absence of recourse in such settings.
On the theoretical side, we formalize recourse with respect to context-conditioned predictors and establish that recourse remains well-defined and admits feasibility and boundedness guarantees. In particular, for a linear regression task trained via a single-layer linear self-attention model, we show that the cost of optimal recourse can be upper bounded and that these bounds tighten as the number of context examples increases, revealing a convergence of ICL-based recourse toward classical recourse for trained models.
In practice, we further propose Adaptive Subspace Recourse for In-Context Learning (ASR-ICL), an efficient zeroth-order recourse framework for black-box ICL models.
ASR-ICL adaptively concentrates the search on a small subset of important features, enabling efficient recourse computation in high-dimensional tabular spaces without access to gradients or model internals. Importantly, ASR-ICL naturally extends beyond binary classification and supports multi-class tabular decision tasks without task-specific modifications. Extensive experiments on multiple real-world datasets and across a range of tabular ICL models and general-purpose language models show that ASR-ICL achieves recourse validity comparable to existing methods for trained models while consistently yielding lower recourse costs. We further empirically confirm the theoretical convergence behavior as the number of context instances increases and demonstrate that ASR-ICL naturally extends to multi-class tabular decision tasks.

Our main contributions are summarized as follows:
\begin{itemize}
    \item We present the first theoretical analysis of algorithmic recourse under in-context learning, establishing that recourse remains well-defined and bounded despite the dynamic, context-conditioned nature of ICL. We further show that, as the number of context examples increases, ICL-based recourse converges to classical linear-model recourse with provable perturbation bounds.
    \item In practice, we propose ASR-ICL, the first recourse framework specifically designed for black-box ICL models, which enables efficient and sparse recourse in high-dimensional spaces under expensive model queries via adaptive zeroth-order optimization.
    \item Extensive experiments on diverse real-world datasets and across multiple models demonstrate that ASR-ICL achieves recourse validity and cost comparable to existing methods for trained models, while naturally supporting multi-class tabular decision tasks. In particular, our results confirm the predicted stabilization of recourse behavior as the number of context instances increases.
\end{itemize}

\section{Related Work}

\textbf{Algorithmic Recourse.}
Prior works have studied algorithmic recourse for predictive models, aiming to generate counterfactual explanations that suggest actionable changes to reverse unfavorable decisions, primarily in binary classification settings \cite{karimi2020algorithmic,de2025time,pmlr-v202-gao23d,kanamori2024learning,ross2021learning}. Beyond classification, CARAT~\cite{datta2022framing} extends recourse to unsupervised anomaly detection in tabular data. These methods are typically developed for trained models with a fixed decision function, under which recourse is defined and computed. Representative approaches include gradient-based optimization methods, integer programming formulations, and black-box search techniques for tabular data~\cite{wachter2017counterfactual,ustun2019actionable,poyiadzi2020face,pawelczyk2020learning}. Several extensions incorporate sparsity, causal or manifold constraints, diversity, and robustness considerations~\cite{karimi2021algorithmic,upadhyay2021towards,turbal2025ellice,ehyaei2023robustness}. Importantly, despite these extensions, existing approaches continue to assume a fixed decision function obtained after training~\cite{kanamorilearning}. In contrast, in-context learning induces a predictor that is constructed dynamically from the context examples provided at inference time, causing the effective decision rule to vary across contexts even for the same individual. This setting violates the core assumption underlying existing recourse formulations and raises fundamental questions about the well-definedness, stability, and computability of recourse. To the best of our knowledge, algorithmic recourse under in-context learning has not been previously studied.

\textbf{ICL for Tabular Data}
Recent work has demonstrated that ICL can be highly effective for tabular prediction \cite{thomas2024retrieval,carey2024dp,breejen2024fine,kuken2025early,wen2025scalable,hollmann2025tabpfn,grinsztajn2025tabpfn}. Existing approaches broadly fall into two categories. The TabPFN family employs numerical representations and is trained on large collections of synthetic tabular tasks, yielding strong performance as a specialized in-context predictor for tabular data~\cite{hollmann2023tabpfn,hollmann2025tabpfn}. In parallel, LLM-based approaches reformulate tabular inputs as text and leverage general-purpose language models, enabling flexible integration with standard LLM infrastructures and downstream applications~\cite{wen2025scalable, qu2025tabicl, gardner2406large}. Despite these differences, both works induce predictions dynamically from context examples at inference time, making the effective decision rule context-dependent rather than fixed after training.  Most existing studies on tabular ICL focus on accuracy, scalability, and representation design. Relatedly, TABGEN-ICL~\cite{fang-etal-2025-tabgen} focuses on in-context learning for tabular data synthesis rather than recourse generation. However, algorithmic recourse for tabular ICL models has not yet been investigated, despite its importance in high-stakes decision-making.

\section{Preliminaries}
\label{sec:preliminaries}

Let $x \in \mathcal{X}$ denote an instance with $d$ features, where $\mathcal{X} \subseteq \mathbb{R}^{d_c} \times \mathcal{Z}^{d_d}$ consists of $d_c$ continuous and $d_d$ discrete attributes. 
Let $f: \mathcal{X} \rightarrow \mathcal{Y}$ be a fixed predictive model, where $\mathcal{Y} = \{0,1\}$ for binary classification or $\mathcal{Y} = \{1,\dots,C\}$ for multi-class classification. 
Given an input $x$, the model outputs a prediction $y = f(x)$. 
We focus on the scenario where the model receives an unfavorable outcome and the desired target label is $y^+ \in \mathcal{Y}$.

\textbf{Algorithmic Recourse.}
Algorithmic recourse aims to provide actionable modifications that would change an unfavorable prediction to a desired one. 
Let $x' \in \mathcal{X}$ denote a candidate recourse for $x$. 
A valid recourse satisfies $f(x') = y^+$. 
Among all such candidates, recourse seeks the minimal-cost modification under a cost function $c: \mathcal{X} \times \mathcal{X} \rightarrow \mathbb{R}_{\ge 0}$, subject to feasibility constraints $\Omega(x)$ that encode feature-level actionability (e.g., immutability and monotonicity):
\begin{equation}
x^* = \arg\min_{x' \in \Omega(x)} \; c(x, x')
\quad \text{s.t.} \quad f(x') = y^+.
\label{eq:recourse}
\end{equation}

\textbf{In-context Learning.}
In-context learning (ICL) refers to the prediction paradigm in which a model performs a task by conditioning on a set of labeled examples provided at inference time, without explicit parameter updates.

Let $\mathcal{C} = \{(x_i, y_i)\}_{i=1}^k$ denote a context set of $k$ labeled examples, where $x_i \in \mathcal{X}$ and $y_i \in \mathcal{Y}$.  
Given a query instance $x \in \mathcal{X}$, an in-context learner induces a predictor
\[
f_{\mathcal{C}}(x) = \mathrm{LLM}(\mathcal{C}, x),
\]
where $\mathrm{LLM}(\cdot)$ denotes a large language model evaluated on a prompt containing the context examples and the query instance.

Unlike conventional predictors $f$ that are fixed after training, the in-context predictor $f_{\mathcal{C}}$ is not fixed but depends on the choice and composition of the context set $\mathcal{C}$. 
This context-conditioned and implicit decision rule fundamentally differs from standard trained models and forms the basis of our study of algorithmic recourse for ICL.

\section{Theoretical Analysis}
\label{sec:theory}
In this section, we establish the theoretical foundations of algorithmic recourse
within the in-context learning framework. We first formalize the problem of algorithmic recourse under the ICL theory, then investigate the conditions for recourse feasibility, and finally derive a high-probability upper bound for the optimal perturbation cost.
\subsection{Problem Setup}
\textbf{In-context Predictor.}
Following the setup in \citet{zhang2024trained}, we consider a linear regression task where each instance is defined by a task weight $w \sim \mathcal{N}(0, I_d)$ and covariates $x_i \sim \mathcal{N}(0, \Lambda)$ with a positive definite covariance matrix $\Lambda \in \mathbb{R}^{d \times d}$. The labels are generated as $y_i = \langle w, x_i \rangle$. Given a test prompt of length $M$, $P = (x_1, y_1, \dots, x_M, y_M, x_{\text{query}})$, the data is organized into an embedding matrix $E \in \mathbb{R}^{(d+1) \times (M+1)}$:
\begin{equation}
E = \begin{pmatrix} 
x_1 & x_2 & \dots & x_M & x_{\text{query}} \\ 
y_1 & y_2 & \dots & y_M & 0 
\end{pmatrix}.
\end{equation}
We employ a single-layer linear self-attention (LSA) model $f_{\text{LSA}}(E; \theta)$, defined as:
\begin{equation}
f_{\text{LSA}}(E; \theta) = E + W^{PV} E \frac{E^\top W^{KQ} E}{\rho},
\end{equation}
where $\theta = (W^{KQ}, W^{PV})$ denotes the concatenated weight matrices and $\rho$ is a normalization factor. This model has been widely used in previous theoretical analyses of ICL \citep{von2023transformers,li2023transformers,ahn2023transformers,zhang2024trained,fu2025short,fu2026understanding}.


The in-context prediction for a query $x_{\text{query}}$ is obtained from the bottom-right entry of the model output. Formally, we define the in-context predictor $f_{\mathcal{C}}: \mathbb{R}^d \to \mathbb{R}$ as:
\begin{equation}
f_{\mathcal{C}}(x_{\text{query}}) = [f_{\text{LSA}}(E; \theta)]_{d+1, M+1}.
\label{eq:f_c_definition}
\end{equation}
The model is trained via gradient flow $\frac{d}{dt} \theta = -\nabla \mathcal{L}_{pre}(\theta)$ on the population loss over training prompts of length $N$, where the the population loss is:
\begin{equation}
\mathcal{L}_{pre}(\theta) = \frac{1}{2} \mathbb{E}_{w_{\tau}, x_{\tau,1:N}, x_{\tau, \text{query}}} \left[  f_{\mathcal{C}}(x_{\tau,\text{query}}) - \langle w_{\tau}, x_{\tau, \text{query}} \rangle  \right]^2
\end{equation}
\textbf{Objective Function.}
To find a recourse $x'$ for the in-context predictor $f_{\mathcal{C}}$, we follow the differentiable framework proposed by \citet{wachter2017counterfactual}.  Given an instance $x$, the objective is to find a recourse $x'$ that minimizes the following joint loss function:
\begin{equation} 
\mathcal{L}(x, x') = (f_{\mathcal{C}}(x'))^2 + \lambda \|x - x'\|_2^2,
\label{eq:scfe_objective}
\end{equation}
where $f_{\mathcal{C}}(x')$ is the prediction score produced by the in-context predictor defined in Eq.~\eqref{eq:f_c_definition}. The second term, $\|x - x'\|_2^2$, represents the squared $L_2$ distance between the original instance and the candidate recourse, serving as a measure of proximity. The hyperparameter $\lambda$ controls the trade-off between achieving the desired outcome (where $f_{\mathcal{C}}(x')$ approaches zero) and ensuring that the recourse remains close to the original input. The optimal recourse is then obtained as $x^* = \arg\min_{x'} \mathcal{L}(x, x')$.
\subsection{Feasibility Guarantees}
We first provide sufficient conditions for the in-context predictor $f_{\mathcal{C}}$ defined in Eq.~\eqref{eq:f_c_definition} to provide a universal recourse guarantee. To this end, we distinguish between actionable features, which individuals can modify (e.g., income), and immutable features, which are inherently fixed (e.g., age).
\begin{proposition}\label{prop:1}
    If all features are unbounded, then a  predictor induced by in-context learning with at least one actionable feature provides recourse to all individuals.
\end{proposition}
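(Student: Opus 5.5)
The plan is to use the explicit form of the trained in-context predictor from the setup of \citet{zhang2024trained}. That form is linear in the query, so recourse reduces to moving along one coordinate of a hyperplane. First, recall their result: gradient flow on $\mathcal{L}_{pre}$ from their initialization converges to weights $W^{KQ}, W^{PV}$ whose only relevant blocks are the bottom-right scalar of $W^{PV}$ and the top-left $d\times d$ block $\Gamma^{-1}$ of $W^{KQ}$, with
\[
\Gamma = \Bigl(1+\tfrac{1}{N}\Bigr)\Lambda + \tfrac{\mathrm{tr}(\Lambda)}{N} I_d .
\]
Substituting into Eq.~\eqref{eq:f_c_definition} with $\rho = M$, the bottom-right entry becomes
\[
f_{\mathcal{C}}(x') = x'^\top \beta_{\mathcal{C}}, \qquad \beta_{\mathcal{C}} := \Gamma^{-1}\,\frac{1}{M}\sum_{i=1}^M y_i x_i .
\]
Here $\beta_{\mathcal{C}}$ depends only on the context and not on the query. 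So for a fixed context, the in-context predictor is an affine (in fact linear) function of the query. I would state this explicitly as the structural fact the proof rests on. For a general black-box ICL model one cannot say more, so the proposition is understood in this theoretical setting.

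Second, split the features into the immutable index set $I$ and the actionable set $A \neq \emptyset$. For an individual $x$, restrict candidates to $x' = x + t e_j$ with $j \in A$ and $t \in \mathbb{R}$; this is allowed because features are unbounded. Then
\[
f_{\mathcal{C}}(x + t e_j) = f_{\mathcal{C}}(x) + t\,(\beta_{\mathcal{C}})_j ,
\]
so whenever $(\beta_{\mathcal{C}})_j \neq 0$ the prediction sweeps all of $\mathbb{R}$ as $t$ varies. Hence any target is attained: the value $0$ in the objective \eqref{eq:scfe_objective} via $t = -f_{\mathcal{C}}(x)/(\beta_{\mathcal{C}})_j$, or any thresholded favorable label $y^+$. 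A valid $x'$ in $\Omega(x)$ therefore exists for every $x$. Existence of a cost minimizer then follows, because the feasible valid set is a nonempty closed affine slice and $c$ is coercive (e.g.\ $\|\cdot\|_2^2$).

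The main obstacle is guaranteeing $(\beta_{\mathcal{C}})_j \neq 0$ for some actionable $j$, because the argument fails if the actionable coordinates of $\beta_{\mathcal{C}}$ all vanish. I would handle this probabilistically. With $y_i = \langle w, x_i\rangle$, each entry $(\beta_{\mathcal{C}})_j$ is a nonzero polynomial in the Gaussian variables $(w, x_1,\dots,x_M)$. To see that it is not identically zero, note that its expectation is $(\Gamma^{-1}\Lambda\, w)_j$ conditionally on $w$, and $\Gamma^{-1}\Lambda$ is invertible. The zero set of a nonzero polynomial has Lebesgue measure zero, and the joint law of $(w, x_1,\dots,x_M)$ is absolutely continuous. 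It follows that $(\beta_{\mathcal{C}})_j \neq 0$ almost surely for every $j$, so the proposition holds for almost every context. If a deterministic statement is wanted, I would instead phrase the hypothesis as the standard nondegeneracy condition that $f_{\mathcal{C}}$ depends on at least one actionable feature, and note that this condition holds with probability one.
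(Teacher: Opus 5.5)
Your proof is correct and follows the paper's argument: both use the closed form $f_{\mathcal C}(x) = x^\top \Gamma^{-1} S w$ from \citet{zhang2024trained} (your $\beta_{\mathcal C}$ equals $\Gamma^{-1} S w$ because $y_i = \langle w, x_i\rangle$), and both move along a single unbounded actionable coordinate $j$ until the score crosses the decision threshold. Your almost-sure argument that $(\beta_{\mathcal C})_j \neq 0$ genuinely strengthens the paper's proof: the paper's ``without loss of generality $[\Gamma^{-1} S w]_j > 0$'' silently excludes the degenerate case where the actionable coefficient vanishes, and in that case the proposition as literally stated can fail.
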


\begin{proposition}\label{prop:2}
    If all features are bounded, then a  predictor induced by in-context learning with at least one immutable feature may deny recourse to some individuals.
\end{proposition}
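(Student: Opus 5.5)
Since Proposition~\ref{prop:2} is an existence claim (``may deny recourse''), I will prove it by exhibiting one explicit instance of the setting in which some individual has no feasible recourse. I will use the trained single-layer LSA predictor of Eq.~\eqref{eq:f_c_definition} rather than an arbitrary model. The first step is to write $f_{\mathcal{C}}$ in closed form. Under the gradient-flow solution of \citet{zhang2024trained}, only the relevant blocks of $W^{KQ}$ and $W^{PV}$ are nonzero, and the prediction reduces to
\[
f_{\mathcal{C}}(x') = \Big\langle \Gamma \tfrac{1}{M}\textstyle\sum_{i=1}^M y_i x_i,\; x' \Big\rangle =: \langle \hat w_{\mathcal{C}}, x'\rangle,
\]
where $\Gamma$ is a fixed positive definite matrix (approximately $\Lambda^{-1}$ for large $N$). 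Hence, for a fixed context, $f_{\mathcal{C}}$ is a linear function of the query with coefficient vector $\hat w_{\mathcal{C}}$, and the favorable region is a half-space $\{x' : \langle \hat w_{\mathcal{C}}, x'\rangle \ge t\}$ for some decision threshold $t$.

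The second step is the construction. I take $d=2$, with feature $1$ immutable and feature $2$ actionable, and restrict feature $2$ to an interval $[a,b]$ (and feature $1$ to a bounded interval as well). I then choose a context with nonzero labels so that $\hat w_{\mathcal{C}} = (w_1, w_2)$ has $w_1 \neq 0$. This is possible because the map from contexts to $\hat w_{\mathcal{C}}$ is onto $\mathbb{R}^2$: for instance, with $M=1$ we get $\hat w_{\mathcal{C}} = \Gamma\, y_1 x_1$ and $y_1 = \langle w, x_1\rangle$, so varying $w$ and $x_1$ reaches any direction. For an individual $x = (x_1^0, x_2^0)$, every feasible $x'$ has the form $(x_1^0, s)$ with $s \in [a,b]$. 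Its score is therefore at most $w_1 x_1^0 + \max(w_2 a, w_2 b)$. If I pick $x_1^0$ within its bounded range so that $w_1 x_1^0 + \max(w_2 a, w_2 b) < t$, the individual is infeasible for recourse. For example, I can scale the context so that $|w_1|$ dominates $|w_2|\max(|a|,|b|)$ and set $x_1^0$ to the endpoint of its range for which $w_1 x_1^0$ is most negative. The same argument works for the Wachter objective in Eq.~\eqref{eq:scfe_objective}: if the target value $0$ (or the target sign) is unattainable over the compact feasible set, then $\min_{x'} f_{\mathcal{C}}(x')^2 > 0$, so no valid recourse exists.

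The main obstacle is making sure the counterexample is realizable within the paper's own constraints. The prediction must come from the trained LSA rather than from a freely chosen linear rule, and the immutable feature's coefficient must be large enough relative to the bounded actionable range. This is where the first step does the work: because the trained $\Gamma$ is invertible and the context $\{(x_i, y_i)\}$ is freely chosen, any coefficient direction can be realized. I will present the proof as an explicit numeric instance (for example, $\Gamma \approx \Lambda^{-1} = I$, a single context point, and feature bounds $[0,1]$) and remark that it contrasts with Proposition~\ref{prop:1}: there, unboundedness of an actionable feature with a nonzero coefficient lets the linear score reach any target.
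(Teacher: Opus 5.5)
Your proposal is correct and takes essentially the same approach as the paper. Both write the ICL predictor as the linear score $x^\top \Gamma^{-1} S w$, realize a coefficient vector in which an immutable feature carries a dominating negative weight, and observe that bounded actionable features then cannot lift some individuals over the threshold. (The paper uses $d$ binary actionable features with unit weights, a binary immutable feature with weight $\alpha<-1$, and threshold $d$; note also that your $\Gamma$ is the paper's $\Gamma^{-1}$.) Your $M=1$ argument that any nonzero coefficient vector is realizable fills in a step the paper only asserts.
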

\begin{remark}
    Proposition~\ref{prop:1}and~\ref{prop:2} highlight how recourse feasibility depends on feature bounds. A meaningful feasibility claim requires a balanced setting of these limits: they must avoid being too loose, which allows for unrealistic changes in actionable variables(e.g., income), while also remaining broad enough to avoid prematurely concluding that recourse is impossible.
\end{remark}
\subsection{Upper bound of Optimal Recourse Cost}
To quantify the cost of recourse, we analyze the upper bound of the optimal recourse effort. Given a set of test demonstrations $\{(x_i, y_i)_{i=1}^M\}$, we first derive a closed-form solution for the recourse effort in the in-context learning setting.

\begin{lemma}[Optimal Recourse for In-Context Predictors]\label{lem:1}
Consider the in-context predictor $f_{\mathcal{C}}$  defined in Eq.~\eqref{eq:f_c_definition} and the objective function $\mathcal{L}(x, x')$ defined in Eq.~(\ref{eq:scfe_objective}). For a given instance $x$, the optimal recourse $x^* = x + \delta_{\text{ICL}}^*$ that minimizes $\mathcal{L}(x, x')$ is achieved by the perturbation:
\begin{equation}
\delta_{\text{ICL}}^* = -f_{\mathcal{C}}(x) \cdot \left( \Gamma^{-1} S w w^\top S \Gamma^{-1} + \lambda I \right)^{-1} \Gamma^{-1} S w,
\end{equation}
where $S := \frac{1}{M} \sum_{i=1}^M x_i x_i^\top$ is the empirical covariance of the prompt covariates, $\Gamma := \left( 1 + \frac{1}{N} \right) \Lambda + \frac{1}{N} \text{tr}(\Lambda) I_d$ is the pre-training effective covariance matrix, and $f_{\mathcal{C}}(x) = x^\top \Gamma^{-1} S w$ is the continuous prediction score of the in-context learner.
\end{lemma}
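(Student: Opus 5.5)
The plan is to reduce the problem to a convex quadratic minimization by first identifying the in-context predictor $f_{\mathcal{C}}$ as an explicit linear function of the query. I will invoke the main convergence result of \citet{zhang2024trained}. Under their setup (Gaussian tasks $w\sim\mathcal{N}(0,I_d)$, covariates $x_i\sim\mathcal{N}(0,\Lambda)$, their initialization, gradient flow on $\mathcal{L}_{pre}$ with training prompt length $N$), the LSA parameters converge to a global minimizer. At that minimizer the bottom-right output entry equals
\[
f_{\mathcal{C}}(x_{\text{query}}) = x_{\text{query}}^\top \Gamma^{-1}\Big(\tfrac{1}{M}\textstyle\sum_{i=1}^M y_i x_i\Big),
\qquad \Gamma = \big(1+\tfrac1N\big)\Lambda + \tfrac1N \mathrm{tr}(\Lambda) I_d .
\]
Substituting $y_i=\langle w,x_i\rangle$ gives $\frac1M\sum_i y_i x_i = S w$. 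Hence $f_{\mathcal{C}}(x') = a^\top x'$ with $a := \Gamma^{-1} S w$, which is the stated score formula.

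This step is the main obstacle: one has to check carefully that the limiting parameters from \citet{zhang2024trained} produce exactly this form, including the normalization $\rho$ and the block structure of $W^{KQ}, W^{PV}$. In particular, the zero label slot of the query column must not contribute, and only the $(d+1)$-th row of $W^{PV}$ and the top-left $d\times d$ block of $W^{KQ}$ should matter. I would verify this by writing out the $(d+1,M+1)$ entry of $E + W^{PV}E\,E^\top W^{KQ}E/\rho$ under the converged parameters. This computation should reduce to $y$-weighted sums of $x_i$ multiplied against $x_{\text{query}}$.

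Once linearity is established, the objective becomes
\[
\mathcal{L}(x,x') = (a^\top x')^2 + \lambda\|x'-x\|_2^2 .
\]
Its Hessian is $2(aa^\top + \lambda I)$, which is positive definite for $\lambda>0$, so $\mathcal{L}$ is strictly convex in $x'$ and has a unique minimizer characterized by the first-order condition. Writing $x' = x+\delta$, the gradient condition is $aa^\top(x+\delta) + \lambda\delta = 0$. Equivalently,
\[
(aa^\top + \lambda I)\,\delta = -a\,(a^\top x) = -f_{\mathcal{C}}(x)\,a .
\]
Since $aa^\top+\lambda I$ is invertible, this gives $\delta^*_{\text{ICL}} = -f_{\mathcal{C}}(x)\,(aa^\top+\lambda I)^{-1} a$.

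Finally, I match notation with the statement. Both $\Gamma$ and $S$ are symmetric, so $a^\top = w^\top S\Gamma^{-1}$ and therefore $aa^\top = \Gamma^{-1}Sww^\top S\Gamma^{-1}$, which yields the claimed expression. As a remark, the Sherman--Morrison formula simplifies this to $\delta^*_{\text{ICL}} = -f_{\mathcal{C}}(x)\,a/(\lambda+\|a\|_2^2)$, which will be convenient for the subsequent cost bounds.
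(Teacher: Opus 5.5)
Your proposal is correct and follows essentially the same route as the paper: you use the converged LSA solution of \citet{zhang2024trained} to identify $f_{\mathcal{C}}$ as a linear score with effective weight $\Gamma^{-1}Sw$, and then apply the closed-form recourse for a linear score. The only difference is that you derive that linear-case formula directly from the first-order condition of the strictly convex quadratic, and you explicitly check the $(d+1,M+1)$ output entry, whereas the paper cites the formula as a separate lemma for linear classifiers.
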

\begin{remark}
When the training context length $N$ and test prompt length $M$ are sufficiently large, we have $\Gamma^{-1} \approx \Lambda^{-1}$ and $S \approx \Lambda$, such that the optimal recourse effort $\delta_{\text{ICL}}^*$ converges to the linear solution $\delta_{\text{Linear}}^* = -(x^\top w) (ww^\top + \lambda I)^{-1} w$. This establishes that for sufficiently long contexts, ICL-based recourse effectively recovers the optimal strategy of the underlying linear predictor.
\end{remark}
In the absence of a fixed demonstration set $\{(x_i, y_i)_{i=1}^M\}$, when the samples are drawn i.i.d.\ from the underlying distribution, the squared norm of the optimal recourse effort admits the following high-probability upper bound.
\begin{theorem}[High-Probability Upper Bound]\label{thm:1}
Let $\delta \in (0,1)$. For a given task weight $w$ and covariance $\Lambda$, let the pre-training context length $N$ and test prompt length $M$ be sufficiently large. Then, with probability at least $1-\delta$ over the sampling of the test prompt, the optimal recourse perturbation $\delta_{\text{ICL}}^*$ satisfies:
\begin{equation} \label{eq:upper_bound}
\begin{aligned}
\|\delta_{\text{ICL}}^*\|^2 \le 2 \ln(4/\delta) \cdot \Big[ &\mathcal{A} + \frac{\mathcal{B}}{N} + \mathcal{C} \sqrt{\frac{\ln(2d/\delta)}{M}} \\
&+ \mathcal{O}(N^{-2} + M^{-1}) \Big],
\end{aligned}
\end{equation}
where the geometric coefficients are defined as:
\begin{align*}
\mathcal{A} &= \textstyle \frac{w^\top \Lambda w}{\|w\|^2}, 
\mathcal{B} = \textstyle \frac{2\operatorname{tr}(\Lambda)}{\|w\|^4} \left[ (w^\top \Lambda w)(w^\top \Lambda^{-1} w) - \|w\|^4 \right],
\end{align*}
and $\mathcal{C} > 0$ is a constant depending on $\|\Lambda\|_{\rm op}, \|\Lambda w\|$ and $\|\Gamma^{-1}\|_{\rm op}$, but independent of $N,M$.
\end{theorem}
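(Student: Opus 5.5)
The plan is to first simplify the closed form of Lemma~\ref{lem:1} to a rank-one expression, and then to split the randomness into two independent sources: the query $x$ (a scalar Gaussian tail) and the prompt covariance $S$ (a matrix concentration bound). Write $v := \Gamma^{-1} S w$, so that $f_{\mathcal{C}}(x) = x^\top v$ and $\Gamma^{-1} S w w^\top S \Gamma^{-1} = v v^\top$. By Sherman--Morrison, $(vv^\top + \lambda I)^{-1} v = v/(\lambda + \|v\|^2)$. Hence
\[
\delta_{\text{ICL}}^* = -\frac{(x^\top v)\, v}{\lambda + \|v\|^2},
\qquad
\|\delta_{\text{ICL}}^*\|^2 = \frac{(x^\top v)^2 \|v\|^2}{(\lambda + \|v\|^2)^2} \le \frac{(x^\top v)^2}{\|v\|^2}.
\]
This reduces the problem to bounding the scalar $(x^\top v)^2/\|v\|^2$.

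For the query, I treat $x \sim \mathcal{N}(0,\Lambda)$ as part of the test prompt, independent of $x_{1:M}$. Conditional on $S$, we have $x^\top v \sim \mathcal{N}(0, v^\top \Lambda v)$. The Gaussian tail bound $\Pr(|Z| > t\sigma) \le 2e^{-t^2/2}$ with failure probability $\delta/2$ then gives
\[
(x^\top v)^2 \le 2\ln(4/\delta)\, v^\top \Lambda v .
\]
Therefore $\|\delta_{\text{ICL}}^*\|^2 \le 2\ln(4/\delta)\, R(v)$, where $R(v) := v^\top \Lambda v / \|v\|^2$ is a Rayleigh quotient. The theorem then amounts to a perturbation expansion of $R$ around $v_0 := w$, for which $R(w) = \mathcal{A}$.

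For the prompt, write $S = \Lambda + E$. A matrix Bernstein (or Gaussian covariance concentration) bound, with failure probability $\delta/2$, gives
\[
\|E\|_{\rm op} \lesssim \|\Lambda\|_{\rm op}\sqrt{\ln(2d/\delta)/M}
\]
for $M$ large. Expanding $\Gamma^{-1}$ in $1/N$ gives
\[
\Gamma^{-1} = \Lambda^{-1} - \tfrac1N\bigl(\Lambda^{-1} + \operatorname{tr}(\Lambda)\Lambda^{-2}\bigr) + O(N^{-2}),
\]
so that $v = w + u + \Gamma^{-1} E w + O(N^{-2})$ with $u = -\tfrac1N\bigl(w + \operatorname{tr}(\Lambda)\Lambda^{-1}w\bigr)$. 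The first-order variation of $R$ at $w$ in direction $u$ is
\[
\frac{2u^\top\Lambda w}{\|w\|^2} - \frac{2(w^\top\Lambda w)(w^\top u)}{\|w\|^4}.
\]
Substituting $u^\top \Lambda w = -\tfrac1N\bigl(w^\top\Lambda w + \operatorname{tr}(\Lambda)\|w\|^2\bigr)$ and $w^\top u = -\tfrac1N\bigl(\|w\|^2 + \operatorname{tr}(\Lambda)\, w^\top\Lambda^{-1}w\bigr)$, the $w^\top\Lambda w/\|w\|^2$ and $\operatorname{tr}(\Lambda)$ terms cancel. What remains is exactly $\mathcal{B}/N$. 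The $E$-direction contributes at most $\|\nabla R(w)\|\,\|\Gamma^{-1}\|_{\rm op}\|E\|_{\rm op}\|w\|$, which yields the $\mathcal{C}\sqrt{\ln(2d/\delta)/M}$ term; the dependence of $\mathcal{C}$ on $\|\Lambda\|_{\rm op}$, $\|\Lambda w\|$ and $\|\Gamma^{-1}\|_{\rm op}$ comes through $\nabla R(w) = 2(\Lambda w - R(w)w)/\|w\|^2$. The two failure events are combined by a union bound.

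The main obstacle is making the Taylor remainder rigorous. Second-order terms in $u$ give $O(N^{-2})$, and second-order terms in $E$ give $O(\|E\|^2) = O(\ln(2d/\delta)/M)$, which is absorbed into $\mathcal{O}(M^{-1})$. Cross terms are of order $N^{-1}M^{-1/2}$ and are dominated by the displayed terms. This requires $v$ to stay in a neighbourhood of $w$ where $\|v\|$ is bounded below, say $\|v\| \ge \|w\|/2$, so that the Hessian of $R$ is uniformly bounded there. I would handle this by using the ``$N, M$ sufficiently large'' hypothesis to guarantee $\|v - w\| \le \|w\|/2$ on the good event. On this neighbourhood $R$ is smooth with Hessian norm $O(\|\Lambda\|_{\rm op}/\|w\|^2)$, so a second-order Taylor bound closes the argument.
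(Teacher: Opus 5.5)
Your proposal is correct and follows the paper's own route: Sherman--Morrison to reduce $\|\delta_{\text{ICL}}^*\|^2$ to $(x^\top v)^2/\|v\|^2$, a conditional Gaussian tail for the query, matrix concentration for $S$, a first-order expansion of the resulting Rayleigh quotient in $S-\Lambda$ and in $1/N$ that gives $\mathcal{A}+\mathcal{B}/N$ plus the $\sqrt{\ln(2d/\delta)/M}$ term, and a union bound with $\delta/2$ on each event. Your version is somewhat cleaner: your inequality $\|\delta_{\text{ICL}}^*\|^2\le (x^\top v)^2/\|v\|^2$ holds for every $\lambda\ge 0$ (the paper just sends $\lambda\to 0$), and you verify the $\mathcal{B}/N$ coefficient and the Taylor remainder explicitly; the one soft spot, which the paper shares, is that for unbounded Gaussian covariates the constant in your bound on $\|S-\Lambda\|_{\rm op}$ also depends on $\operatorname{tr}(\Lambda)/\|\Lambda\|_{\rm op}$, which is harmless for fixed $\Lambda$.
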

\begin{remark}
    Theorem~\ref{thm:1} highlights three key aspects of the upper bound of the optimal recourse effort:  
    (1) The leading term $2 \ln (\tfrac{4}{\delta})\cdot\mathcal{A}$ serves as the dominant contribution to the bound, and can be interpreted as an upper bound on the linear score $f = w^\top x + b$. As $N, M \to \infty$, the additional correction terms vanish and the bound asymptotically tightens to the linear case.  
    (2) The refinement terms $\tfrac{1}{N}\mathcal{B}$ and $\mathcal{C} \sqrt{\tfrac{\ln(2d/\delta)}{M}}$ exhibit different convergence rates with respect to $N$ and $M$, reflecting the distinct roles of the number of training demonstrations and the number of test demonstrations.  
    (3) Notably, even as $M \to \infty$, the bound does not converge to the exact score; the remaining deviation is governed by the $\mathcal{O}(\tfrac{1}{N})$ term, emphasizing that the sample complexity fundamentally depends on $N$, the number of training demonstrations. 
\end{remark}

\subsection{Discussions}\label{sec:4.4}

Theorem~\ref{thm:1} yields critical practical insights for implementing algorithmic recourse within the In-Context Learning paradigm. In particular, Eq.~\eqref{eq:upper_bound} suggests two theory-supported trends in the analyzed linear-ICL setting. First, increasing the number of test-time demonstrations $M$ tightens the recourse perturbation bound, while the pre-training context length $N$ is typically fixed and beyond the user's control in frozen black-box models. Second, the perturbation bound depends on the feature dimensionality $d$ through the $\ln(2d/\delta)$ term, suggesting that high-dimensional tabular spaces may require more context examples to maintain stable and low-cost recourse. In real-world deployments involving frozen, black-box large language models, the pre-training context length $N$ is typically immutable and beyond the user's control. Consequently, achieving a lower recourse cost necessitates increasing the number of test-time demonstrations $M$ to refine the decision boundary. However, Eq.~\eqref{eq:upper_bound} reveals a fundamental challenge: the upper bound on recourse perturbation depends on the feature dimensionality $d$ through the $\ln(2d/\delta)$ term. This suggests that as the feature space expands, a larger $M$ is required to maintain the stability and optimality of the recourse solution. To mitigate this effect under limited query budgets, we propose restricting the recourse search to an informative $k$-dimensional subspace ($k \ll d$), thereby artificially reducing the effective dimensionality of the problem. Motivated by this theoretical intuition, we introduce our novel algorithmic framework in the following section.

\section{Method}
\label{sec:method}
Section~\ref{sec:theory} establishes that algorithmic recourse under ICL is theoretically well-defined and converges to classical solutions as the context size grows. However, these guarantees do not directly yield practical recourse algorithms for real-world models, where predictions are produced by black-box models, gradients are unavailable, and recourse must satisfy strict actionability constraints. In this section, we propose \textbf{Adaptive Subspace Recourse for In-Context Learning (ASR-ICL)}, a zeroth-order optimization framework designed to generate sparse and actionable recourse under black-box in-context decision rules.  ASR-ICL progressively focuses optimization on a small set of influential features, enabling scalable recourse generation in high-dimensional tabular domains.

\subsection{Challenges \& Motivations}
Algorithmic recourse is typically formulated as Eq.~\eqref{eq:recourse}, which seeks a feasible modification $x' \in \Omega(x)$ that flips the prediction with minimal cost.
Most existing approaches assume a fixed trained predictor $f$ and rely on gradient access or smooth optimization. In contrast, under ICL, the decision rule is the context-dependent predictor $f_{\mathcal C}$, which is only accessible through expensive black-box queries. As a result, generating recourse becomes a constrained black-box optimization problem in high-dimensional tabular spaces, further complicated by feasibility constraints such as immutability and monotonicity. These challenges motivate a query-efficient framework tailored to ICL-based predictors.

\subsection{Adaptive Subspace Recourse for In-Context Learning (ASR-ICL)}
\paragraph{Recourse Optimization under Black-box ICL}
To make Eq.~\eqref{eq:recourse} computationally tractable for black-box in-context models, we adopt a standard surrogate objective following \citet{wachter2017counterfactual}. While our theoretical analysis in Section~\ref{sec:theory} focuses on a squared $\ell_2$ proximity term for analytical convenience, in practice we allow a general recourse cost that captures mixed tabular feature changes. Specifically, we optimize the loss
\begin{equation}
L_{\mathrm{pr}}(x,x') := \bigl(1-\mathbb I[\hat y(x')=y^+]\bigr) +\lambda c(x,x')
\label{eq:recourse_obj}
\end{equation}
where $c(x,x')$ is the recourse cost defined in Section~\ref{sec:preliminaries}, and $\lambda>0$ balances prediction changes against proximity. This objective can be evaluated solely through queries to $f_{\mathcal C}$, without requiring gradients or model internals.

In our black-box setting, we only observe whether the desired target outcome is achieved, represented by the indicator $\mathbb I[\hat y(x')=y^+]$, which reduces recourse success to a binary signal regardless of whether the underlying task is binary or multi-class. Although this yields a non-smooth objective, it is unavoidable in the strict black-box setting and can be handled via zeroth-order optimization.

\paragraph{Sparse Subspace Formulation for Recourse}
Directly minimizing Eq.~\eqref{eq:recourse_obj} over all $d$ features is inefficient in tabular domains.
Moreover, recourse explanations are typically sparse: only a few actionable attributes need to be changed. We therefore restrict optimization to feature subsets. Let $S \subseteq \{1,\dots,d\}$ with $|S|=k \ll d$, and enforce $x'_{\bar S}=x_{\bar S}$ to keep the remaining features fixed. The recourse objective in Eq.~\eqref{eq:recourse_obj} can be rewritten as follows
\begin{equation}
\min_{S:|S|=k} \;\; \min_{x'_S} 
L\bigl(x,\,(x'_S, x_{\bar S})\bigr),
\label{eq:subspace_obj}
\end{equation}
which decomposes recourse generation into subspace selection and low-dimensional black-box optimization.

\paragraph{Adaptive Subspace Selection}
The Eq.~\eqref{eq:subspace_obj} requires selecting a feature subset $S$ of size $k$, which is a combinatorial problem in high dimensions. We therefore design an adaptive strategy to efficiently construct informative subspaces during optimization. Each feature $j$ is assigned an importance score $I_j$, which defines the probability of being selected into a subspace:
\begin{equation}
p_{\mathrm{sel}}(j) \propto \exp(I_j).
\end{equation}
We initialize all scores uniformly, i.e., $I_j = 0$ for all $j$, corresponding to an initial uniform sampling distribution. 
At each iteration, we sample a temporary subspace $S_t$ of size $k$ according to $p_{\mathrm{sel}}$:
\begin{equation}
S_t \sim p_{\mathrm{sel}}.
\end{equation}
After optimizing recourse within the sampled subspace $S_t$, we obtain its achievable objective value:
\begin{equation}
r_t = - \min_{x':\, x'_{\bar S_t} = x_{\bar S_t}} L(x,x'),
\end{equation}
which reflects how effective the features in $S_t$ are for improving recourse. We use this value to update the scores of the involved features via an exponential moving average,
\begin{equation}
I_j \leftarrow (1-\alpha) I_j + \alpha \, \frac{r_t}{|S_t|}, \quad \forall j \in S_t.
\end{equation}
where $\alpha$ is a standard smoothing factor fixed to $\alpha=0.5$ in all experiments. As a result, the algorithm gradually focuses on a small set of influential features while still exploring alternative choices.

\paragraph{Optimization Procedure}
Since gradients are unavailable for the black-box predictor $f_{\mathcal C}$,  at each iteration, we sample a feature subspace $S_t \sim p_{\mathrm{sel}}$ and approximately solve the corresponding inner problem in Eq.~\eqref{eq:subspace_obj} to obtain the utility $r_t$. We adopt a standard zeroth-order optimizer~\cite{wang2025distzo2,wang2025zo2}, i.e., a black-box optimization method that relies only on function evaluations, to solve each low-dimensional subspace problem. Since our contribution lies in learning informative subspaces rather than in the optimizer itself, the framework is agnostic to the specific solver choice. In practice, we instantiate it with RACOS~\cite{liu2017zoopt,yu2016derivative}, which is well suited for classification tasks with mixed continuous and categorical features (see more details in Appendix~\ref{app:zoo}).

Tabular domains often involve mixed feature types: numerical features take values in bounded continuous intervals, while categorical features belong to finite discrete sets. RACOS naturally supports such mixed search spaces by optimizing over intervals and grids within each subspace. Actionability constraints are enforced by projecting each candidate solution onto the feasible set, $\mathbf{x}' \leftarrow \Pi_{\Omega}(\mathbf{x}')$, which guarantees that all evaluated recourse instances satisfy immutability and monotonicity requirements. The overall pseudocode of the ASR-ICL is provided in Algorithm \ref{alg:asr-icl} in Appendix.

\section{Experiments}
\subsection{Experimental Setup}
\paragraph{Datasets.}We evaluate ASR-ICL on five real-world tabular datasets covering both binary and multi-class classification tasks. For binary classification, we use Australian Credit~\cite{statlog_(australian_credit_approval)_143}, COMPAS~\cite{angwin2022machine}, and Diabetes~\cite{mathchi_diabetes_kaggle}. For multi-class evaluation, we consider Corporate Rating~\cite{agewerc_corporate_rating} and Student Performance~\cite{elkharoua_students_performance}. These datasets cover diverse high-stakes decision domains and have been widely used in the existing literature on algorithmic recourse~\cite{kanamorilearning,turbal2025ellice,upadhyay2021towards}.

\paragraph{Models.} We evaluate ASR-ICL across different models. We consider two specialized tabular ICL models, TabPFN-2.5~\cite{grinsztajn2025tabpfn} and TabICL~\cite{qu2025tabicl}, which represent state-of-the-art foundation models designed for tabular prediction. We further include four general large language models, Qwen3-4B-Instruct, Qwen3-8B~\cite{yang2025qwen3}, LLaMA-3.2-3B, and LLaMA-3.1-8B~\cite{grattafiori2024llama}. Finally, we evaluate the closed-source model GPT-4o~\cite{hurst2024gpt}. 

\paragraph{Baseline.} We compare ASR-ICL to the following classical baselines for algorithmic recourse in tabular domains: (i) DiCE~\cite{mothilal2020explaining}, an optimization-based method for generating diverse counterfactual explanations; (ii)Actionable Recourse (AR)~\citep{ustun2019actionable}, a constrained optimization approach for linear decision models; and (iii) FACE~\citep{poyiadzi2020face}, a graph-based recourse along the data manifold. We implement all baselines using their original formulations on trained classifiers. 

\begin{table*}[t]
\centering
\caption{Performance comparison between ASR-ICL and existing methods.  All ICL-based results are reported under 32-shot context. We report validity and average cost with mean $\pm$ standard deviation.}
\label{tab:main_results}
\resizebox{0.98\textwidth}{!}{%
\begin{tabular}{llcccccc}
\toprule
\multirow{2}{*}{Model} & \multirow{2}{*}{Method} 
& \multicolumn{2}{c}{Australian Credit} 
& \multicolumn{2}{c}{COMPAS} 
& \multicolumn{2}{c}{Diabetes} \\
\cmidrule(lr){3-4} \cmidrule(lr){5-6} \cmidrule(lr){7-8}
& & Validity & Avg Cost & Validity & Avg Cost & Validity & Avg Cost \\
\midrule

\multicolumn{8}{l}{\textbf{Trained Models}} \\
\midrule
MLP & DiCE 
& $1.00 \pm 0.00$ & $8.96 \pm 1.14$ 
& $1.00 \pm 0.00$ & $7.51 \pm 0.76$ 
& $1.00 \pm 0.00$ & $5.02 \pm 0.25$ \\

Linear Model & AR 
& $0.82 \pm 0.08$ & $1.76 \pm 0.16$ 
& $0.94 \pm 0.18$ & $3.44 \pm 0.28$ 
& $0.72 \pm 0.07$ & $1.52 \pm 0.19$ \\

Linear Model & FACE 
& $1.00 \pm 0.00$ & $6.18 \pm 0.27$ 
& $1.00 \pm 0.00$ & $6.51 \pm 0.29$ 
& $1.00 \pm 0.00$ & $5.13 \pm 0.15$ \\

\midrule
\multicolumn{8}{l}{\textbf{ICL-based Models}} \\
\midrule
TabPFN-2.5 & \multirow{7}{*}{ASR-ICL}
& $1.00 \pm 0.00$ & $3.83 \pm 0.04$ 
& $1.00 \pm 0.00$ & $2.76 \pm 0.05$ 
& $1.00 \pm 0.00$ & $2.78 \pm 0.08$ \\

TabICL & 
& $1.00 \pm 0.00$ & $4.47 \pm 0.15$ 
& $0.81 \pm 0.05$ & $3.44 \pm 0.43$ 
& $1.00 \pm 0.00$ & $2.80 \pm 0.16$ \\

Qwen3-4B-Instruct & 
& $0.79 \pm 0.15$ & $3.01 \pm 0.06$ 
& $1.00 \pm 0.01$ & $2.67 \pm 0.06$ 
& $0.97 \pm 0.03$ & $1.58 \pm 0.12$ \\

Qwen3-8B & 
& $0.87 \pm 0.08$ & $2.94 \pm 0.15$ 
& $0.99 \pm 0.01$ & $2.55 \pm 0.09$ 
& $0.84 \pm 0.04$ & $1.50 \pm 0.09$ \\

LLaMA-3.2-3B & 
& $1.00 \pm 0.00$ & $2.99 \pm 0.07$ 
& $1.00 \pm 0.00$ & $2.43 \pm 0.08$ 
& $0.98 \pm 0.03$ & $1.67 \pm 0.09$ \\

LLaMA-3.1-8B & 
& $0.80 \pm 0.08$ & $2.96 \pm 0.05$ 
& $0.97 \pm 0.04$ & $2.40 \pm 0.07$ 
& $1.00 \pm 0.00$ & $1.50 \pm 0.04$ \\

GPT-4o & 
& $0.99 \pm 0.03$ & $4.75 \pm 0.09$ 
& $0.78 \pm 0.07$ & $3.62 \pm 0.12$ 
& $0.71 \pm 0.05$ & $4.31 \pm 0.09$ \\

\bottomrule
\end{tabular}
}
\end{table*}

\begin{figure*}[t]
    \centering
    \includegraphics[width=0.99\linewidth]{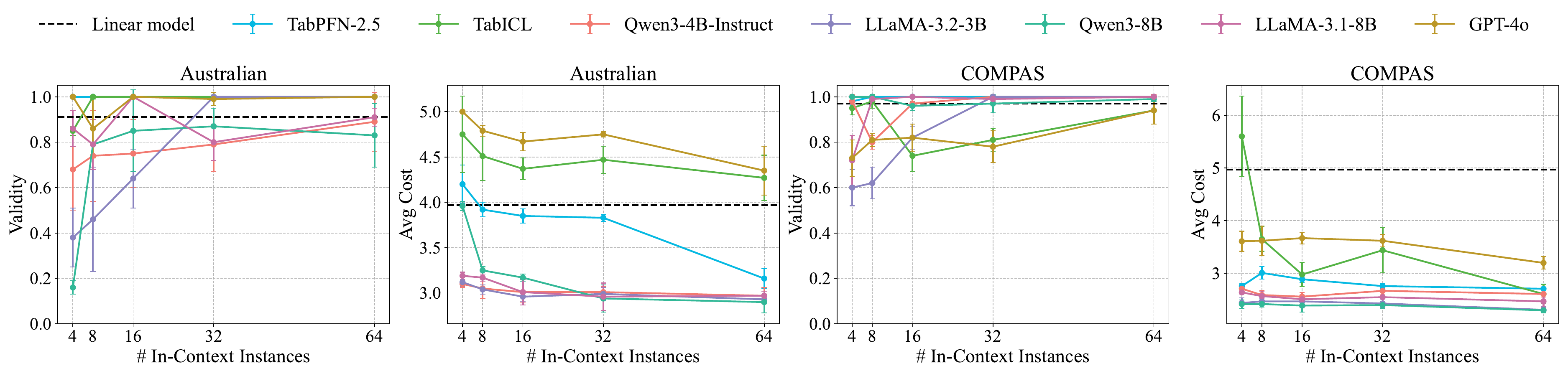}
    \caption{Effect of the number of context instances on recourse quality under ICL.
    Increasing the number of instances improves stability and reduces recourse cost.
    }
    \label{fig:shots_effect}
\end{figure*}

\label{sec:metric}
\noindent {\bf Metrics.} We consider three metrics in our evaluation: \textit{Validity} measures the fraction of instances for which the generated recourse achieves the desired prediction. \textit{Avg Cost} measures the magnitude of feature changes, computed as the $\ell_1$ distance over normalized continuous features plus a unit cost for each changed categorical feature. \textit{Feature Concentration} captures how strongly the search focuses on a small subset of features, measured by $\exp(H(p))$, where $H(p)$ is the entropy of the learned sampling distribution. See more details in Appendix~\ref{app:metrics}.

\noindent {\bf Implementation Details.} All experiments are conducted under the same query budget and feasibility constraints. For ASR-ICL, we use a fixed subspace size $k$ and a RACOS-based zeroth-order solver for inner optimization. Unless otherwise stated, all hyperparameters follow the default settings described in Appendix~\ref{app:details}.

\subsection{Results on ASR-ICL}
Table~\ref{tab:main_results} reports the performance of ASR-ICL across datasets, compared with classical methods on trained models.  While DiCE and FACE achieve perfect validity on trained models, they often require substantially larger recourse costs (e.g., cost $8.96$ on Australian Credit), indicating dense feature changes that may be less actionable in practice.  AR produces cheaper recourse but at the expense of reduced validity. In contrast, ASR-ICL achieves consistently strong validity, including both tabular ICL models and general LLMs, while maintaining a low recourse cost. For example, on Australian Credit, ASR-ICL attains perfect validity with TabPFN at an average cost of $3.83$, significantly lower than DiCE and FACE. Similar trends hold for COMPAS and Diabetes, where ASR-ICL produces valid recourse with costs around $2\!\sim\!3$, despite operating in a fully black-box, context-conditioned setting. Performance remains robust for most open-source LLMs (e.g., LLaMA and Qwen), though GPT-4o exhibits reduced validity on some datasets, highlighting the additional challenge of recourse under weaker or noisier in-context decision rules. Overall, these results demonstrate that ASR-ICL enables actionable algorithmic recourse for modern models, bridging the gap between classical recourse methods designed for trained models and emerging in-context learning models.

\subsection{Empirical Validation of Theoretical Analysis}

\begin{figure}[t]
    \centering
    \includegraphics[width=0.99\linewidth]{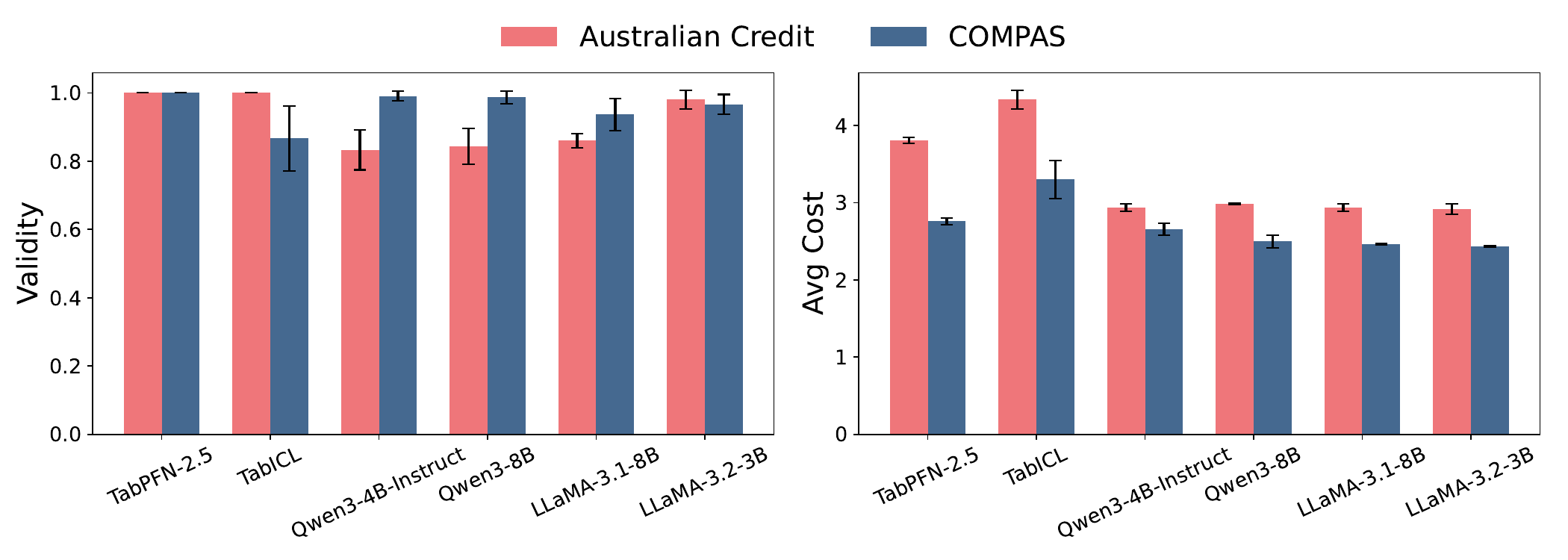}
    \caption{Stability of ASR-ICL under context resampling. We repeat experiments with independently sampled 32-shot context instances while fixing test instances. Bars show mean validity and average recourse cost, with error bars indicating standard deviation.}
    \label{fig:shots_stability}
\end{figure}

\begin{table*}[t]
\centering
\caption{Recourse performance on multi-class tabular tasks. Corporate Rating uses a 24-shot context and Student Performance uses a 40-shot context. We report validity and average cost with mean ± standard deviation.}
\label{tab:multiclass}
\resizebox{0.6\textwidth}{!}{%
\begin{tabular}{lcccc}
\toprule
\multirow{2}{*}{Model} 
& \multicolumn{2}{c}{Corporate Rating} 
& \multicolumn{2}{c}{Student Performance} \\
\cmidrule(lr){2-3} \cmidrule(lr){4-5}
& Validity & Avg Cost & Validity & Avg Cost \\
\midrule
TabPFN-2.5 
& $0.98 \pm 0.02$ & $4.79 \pm 0.19$ 
& $1.00 \pm 0.00$ & $3.63 \pm 0.03$ \\

TabICL 
& $0.99 \pm 0.02$ & $5.79 \pm 0.25$ 
& $1.00 \pm 0.00$ & $3.42 \pm 0.07$ \\

Qwen3-4B-Instruct 
& $0.94 \pm 0.04$ & $3.54 \pm 0.06$ 
& $0.55 \pm 0.10$ & $3.37 \pm 0.06$ \\

Qwen3-8B 
& $0.73 \pm 0.04$ & $3.55 \pm 0.09$ 
& $0.42 \pm 0.22$ & $3.18 \pm 0.10$ \\

LLaMA-3.2-3B 
& $0.95 \pm 0.07$ & $3.01 \pm 0.10$ 
& $0.45 \pm 0.06$ & $3.18 \pm 0.03$ \\

LLaMA-3.1-8B 
& $0.50 \pm 0.24$ & $3.15 \pm 0.11$ 
& $0.43 \pm 0.08$ & $3.24 \pm 0.02$ \\

GPT-4o 
& $0.72 \pm 0.07$ & $4.87 \pm 0.18$ 
& $0.84 \pm 0.05$ & $4.59 \pm 0.08$ \\

\bottomrule
\end{tabular}
}
\end{table*}
We empirically validate the theoretical predictions in Section~\ref{sec:theory} by examining how recourse quality evolves as the number of in-context examples increases. Our analysis is motivated by Lemma~\ref{lem:1} and Theorem~\ref{thm:1}, which suggest that as the context size grows, model behave increasingly like classical linear models, leading to more stable and lower-cost recourse. Figure~\ref{fig:shots_effect} summarizes validity and average recourse cost as a function of the number of in-context instances across diffeent models. When only a few context examples are provided (e.g., 4 or 8 shots), recourse validity is noticeably less stable and varies substantially across models.  As the context size increases, validity consistently improves and becomes more concentrated, eventually approaching the linear-model reference performance. A similar trend is observed for recourse cost. Across all models, the average cost decreases as more context instances are available, indicating that fewer or smaller feature modifications are needed to reach the desired outcome. Recourse costs gradually become comparable to the linear-model reference, and in some cases even slightly lower. This can occur because modern ICL models may admit sparser feature modifications under the same cost metric, leading to more efficient actionable recourse than fixed decision rule.

Finally, Figure~\ref{fig:shots_stability} demonstrates that ASR-ICL produces highly consistent recourse solutions across independently resampled context sets with randomized instance selection and ordering. Both validity and recourse cost exhibit only small variance, suggesting that the proposed method is robust to fluctuations in context composition and does not rely on a specific demonstration choice.

\subsection{Multi-class Recourse}

While most existing recourse methods focus on binary classification, many real-world decision systems are inherently multi-class, such as credit rating assignment and educational outcome prediction. ASR-ICL naturally extends to multi-class tabular settings by optimizing toward a desired target class without requiring task-specific reductions. Table~\ref{tab:multiclass} reports multi-class recourse performance. Across both tasks, ASR-ICL achieves near-perfect validity for specialized tabular ICL predictors such as TabPFN and TabICL, with moderate recourse costs around $3\!\sim\!6$. These results indicate that the proposed framework remains effective beyond the binary setting. For general LLMs, recourse validity is lower, particularly on Student Performance, reflecting the increased difficulty of inducing reliable multi-class decision boundaries from limited tabular context. Nevertheless, ASR-ICL continues to produce feasible low-cost recourse solutions when successful, demonstrating robustness of the framework across diverse black-box models.

\begin{figure}[t]
    \centering
    \includegraphics[width=0.99\linewidth]{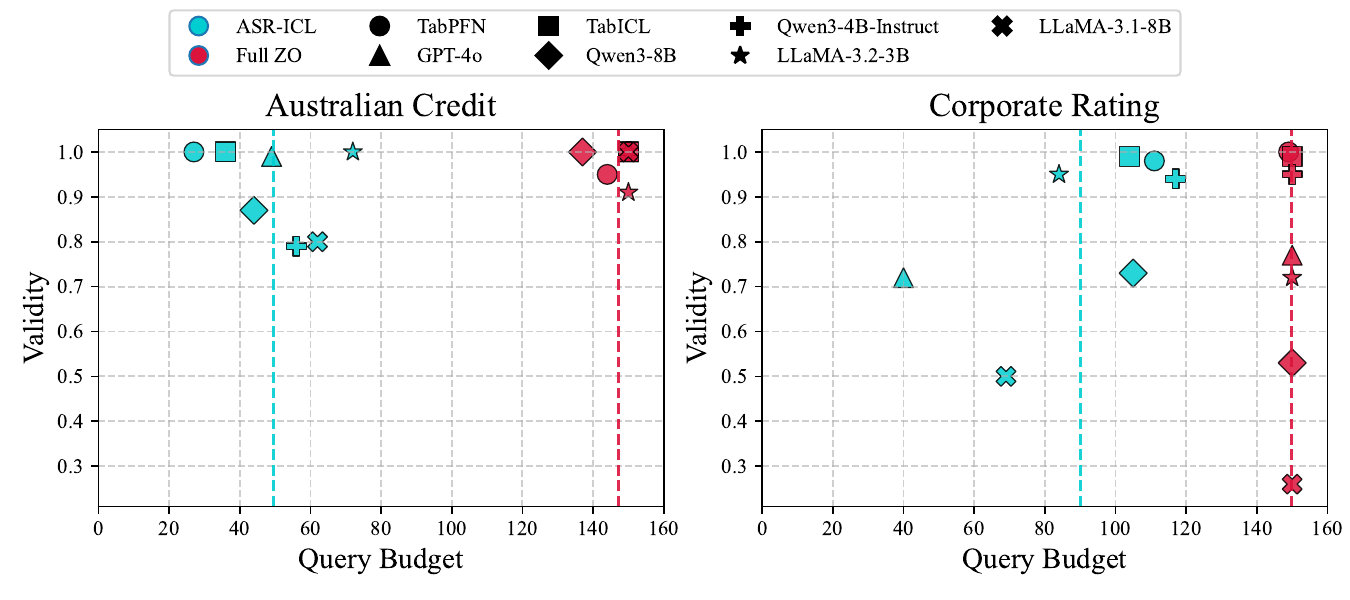}
    \caption{
    Validity versus query budget across different tasks (Australian Credit and Corporate Rating). Each point corresponds to a model--method pair, and vertical dashed lines indicate the mean query budget required by each method. ASR-ICL consistently achieves comparable validity with substantially fewer model queries. See Appendix~\ref{app:budget} for additional results.
    }
    \label{fig:budget_efficiency}
    \vskip -0.2in
\end{figure}

\subsection{Ablation and Sensitive Analysis}
We provide additional analysis of ASR-ICL regarding efficiency, feature behavior, and robustness to key hyperparameters. Figure~\ref{fig:budget_efficiency} shows that across two tabular decision tasks (Australian Credit and Corporate Rating), ASR-ICL achieves high recourse validity with substantially fewer black-box model queries than a non-adaptive full-space zeroth-order baseline (Full ZO), reducing the average query budget by roughly $2$--$3\times$ across different predictors. In addition, ASR-ICL achieves not only query savings but also substantially lower recourse costs than Full ZO, as detailed in Appendix~\ref{app:budget}. Moreover, ASR-ICL progressively concentrates its search on a small subset of important features, promoting sparse and efficient recourse discovery. Feature concentration dynamics are reported in Appendix~\ref{app:feature_concentration}. Sensitivity studies in Appendix~\ref{app:sensitivity} indicate that ASR-ICL remains stable under moderate variations of the trade-off parameter $\lambda$ and the subspace size $k$.

\subsection{Case Study}
We present a qualitative case study on a real instance from the Diabetes dataset, where the original prediction is unfavorable and the goal is to obtain a feasible recourse. Table~\ref{tab:diabetes_case_study} compares recourse modifications produced by classical baselines (DiCE, AR, FACE), Full ZO, and ASR-ICL. DiCE modifies immutable attributes and is thus infeasible, while AR, FACE, and Full ZO typically require changing multiple features. In contrast, ASR-ICL flips the prediction by modifying only two actionable attributes (Glucose and BMI), yielding a sparser and more interpretable recourse suggestion. This example illustrates how ASR-ICL promotes recourse explanations that align with human expectations and practical actionability.

\section{Conclusion}
We presented the first study of algorithmic recourse of in-context learning on tabular data. Our results establish theoretical feasibility and convergence guarantees. We propose ASR-ICL, a query-efficient zeroth-order framework for generating sparse and actionable recourse under black-box ICL models. Experiments on diverse real-world datasets validate the effectiveness of ASR-ICL in both binary and multi-class settings.

\section*{Acknowledgements}
This work is partially supported by the MBZUAI Research Fund BF0100. Di Wang and Shaopeng Fu are supported in part by the funding BAS/1/1689-01-01,RGC/3/7125-01-01, FCC/1/5940-20-05, FCC/1/5940-06-02, and King Abdullah University of Science and Technology (KAUST) – Center of Excellence for Generative AI, under award number 5940 and a gift from Google.

\section*{Impact Statement}
This paper presents work whose goal is to advance the field of Machine
Learning. There are many potential societal consequences of our work, none
which we feel must be specifically highlighted here.

\nocite{langley00}

\bibliography{ICML/Reference}
\bibliographystyle{icml2026}

\newpage
\appendix
\onecolumn
\section{Omit Proof}
\subsection{Proof of Proposition \ref{prop:1}\& \ref{prop:2}}

\begin{proof}[Proof of \ref{prop:1}]
Consider the predictor induced by in-context learning with closed-form solution by \citet{zhang2024trained}
\[
f_{\text{ICL}}(x) = \text{sign}\left(x_{\text{query}}^\top 
\Gamma^{-1} \left( \frac{1}{M} \sum_{i=1}^M x_i x_i^\top \right) w\right).
\]

Let $j$ denote the index of a feature that can be increased or decreased arbitrarily.
Assume, without loss of generality, that
\(
\left[ \Gamma^{-1} \left( \frac{1}{M} \sum_{i=1}^M x_i x_i^\top \right) w \right]_j > 0.
\)

For any feature vector $x$ such that
\[
f_{\text{ICL}}(x) = \operatorname{sign}\!\left( 
x^\top \Gamma^{-1} \left( \tfrac{1}{M} \sum_{i=1}^M x_i x_i^\top \right) w
\right) = -1,
\]
the set of feasible actions from $x$ must contain an action vector
\(
a = [0, a_1, a_2, \ldots, a_d]
\)
such that
\(
a_j > -\frac{
x^\top \Gamma^{-1} \left( \tfrac{1}{M} \sum_{i=1}^M x_i x_i^\top \right) w
}{
\left[ \Gamma^{-1} \left( \tfrac{1}{M} \sum_{i=1}^M x_i x_i^\top \right) w \right]_j
},
\qquad a_k = 0 \ \ \text{for all } k \neq j.
\)

Thus, in-context learning provides $x$ with recourse since
\(
(x + a)^\top \Gamma^{-1} \left( \tfrac{1}{M} \sum_{i=1}^M x_i x_i^\top \right) w > 0,
\)
and
\(
f_{\text{ICL}}(x + a) = \operatorname{sign}\!\left( 
(x + a)^\top \Gamma^{-1} \left( \tfrac{1}{M} \sum_{i=1}^M x_i x_i^\top \right) w
\right) = +1.
\)

Since our choice of $x$ was arbitrary, the result holds for all $x \in H^-$. 
Hence, the predictor induced by in-context learning provides recourse to all individuals in the target population.
\end{proof}

\begin{proof}
[Proof of Proposition 2]
Suppose there are $d$ actionable binary features $x_j\in\{0,1\}$ for $j\in\{1,\dots,d\}$ and one immutable binary feature $x_{d+1}\in\{0,1\}$. 
Assume the in-context predictor has the closed form
\[
 f_{\text{ICL}}(x) = \text{sign}\left(x_{\text{query}}^\top 
\Gamma^{-1}\!\left(\frac{1}{M}\sum_{i=1}^M x_i x_i^\top\right) w\right),
\]
and that the corresponding coefficients satisfy
\[
\left[\Gamma^{-1}\!\left(\frac{1}{M}\sum_{i=1}^M x_i x_i^\top\right) w\right]_j = 1
\quad\text{for } j=1,\dots,d,
\qquad
\left[\Gamma^{-1}\!\left(\frac{1}{M}\sum_{i=1}^M x_i x_i^\top\right) w\right]_{d+1} = \alpha,
\]
with $\alpha < -1$. (Such coefficients can be realized by an appropriate choice of $w$ and demonstration set $\{x_i\}_{i=1}^M$.) Then the ICL score equals
\[
x^\top \Gamma^{-1}\!\left(\tfrac{1}{M}\sum_{i=1}^M x_i x_i^\top\right) w
= \sum_{j=1}^d x_j + \alpha x_{d+1}.
\]
If the decision threshold is $d$ (as in the linear example), then for any $x$ with $x_{d+1}=1$ we have
\[
\sum_{j=1}^d x_j + \alpha x_{d+1} = \sum_{j=1}^d x_j + \alpha < \sum_{j=1}^d x_j - 1 \le d-1,
\]
so such $x$ cannot reach the threshold and therefore has no recourse.

Thus, even though the predictor is realized by in-context learning, the presence of a sufficiently negative coefficient on an immutable feature can cause the predictor to deny recourse to some individuals, exactly as in the linear classifier case.
    
\end{proof}

\subsection{Proof of Lemma~\ref{lem:1}}
\begin{lemma}[Optimal Recourse for Linear classifier]\label{lem:linear_cost} 
Given a scoring function with weights $\mathbf{w}$, minimizing the objective in \eqref{eq:scfe_objective} yields the optimal recourse $\mathbf{x}^* = \mathbf{x} + \delta_{\text{linear}}^*$ for an input $\mathbf{x}$ such that: 

\[
\delta_{linear}^* = -f(\mathbf{x} \cdot (\mathbf{w} \mathbf{w}^T + \lambda I)^{-1} \mathbf{w},
\]  

where \(f(x)=\mathbf{w}^T \mathbf{x} + b\) is a local linear score approximation, and \(\lambda\) is a given hyperparameter.
    
\end{lemma}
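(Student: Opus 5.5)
The objective $\mathcal{L}(x,x') = (f(x'))^2 + \lambda\|x-x'\|_2^2$ with the affine score $f(x') = \mathbf{w}^\top x' + b$ is a strictly convex quadratic in $x'$ whenever $\lambda>0$, since its Hessian is $2(\mathbf{w}\mathbf{w}^\top + \lambda I) \succ 0$. The plan is therefore to write $x' = \mathbf{x} + \delta$ and solve the first-order condition, which is both necessary and sufficient here. The minimizer is then unique, and we only need to match it to the stated formula.

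First we expand the score along the perturbation: $f(\mathbf{x}+\delta) = f(\mathbf{x}) + \mathbf{w}^\top \delta$. Next we set $\nabla_\delta \mathcal{L} = 0$:
\begin{equation*}
2\bigl(f(\mathbf{x}) + \mathbf{w}^\top\delta\bigr)\mathbf{w} + 2\lambda \delta = 0
\quad\Longleftrightarrow\quad
(\mathbf{w}\mathbf{w}^\top + \lambda I)\,\delta = -f(\mathbf{x})\,\mathbf{w}.
\end{equation*}
Because $\mathbf{w}\mathbf{w}^\top + \lambda I$ is positive definite, it is invertible, and inverting gives $\delta^*_{\text{linear}} = -f(\mathbf{x})(\mathbf{w}\mathbf{w}^\top+\lambda I)^{-1}\mathbf{w}$.

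As an optional check, Sherman--Morrison turns this into the explicit form $\delta^* = -\frac{f(\mathbf{x})}{\lambda + \|\mathbf{w}\|^2}\mathbf{w}$. This form makes it clear that the recourse moves along $\mathbf{w}$. It is also the form that will later be used for norm bounds.

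There is no genuine obstacle here. The one point needing care is that the gradient of $(\mathbf{w}^\top\delta + f(\mathbf{x}))^2$ is a rank-one term, which combines with $\lambda I$ into a single linear system. The same template then gives Lemma~\ref{lem:1}. In the in-context setting the query score is $f_{\mathcal{C}}(x') = x'^\top \Gamma^{-1} S w$ by the closed form of \citet{zhang2024trained}, which is linear in $x'$ with effective weight $\tilde w = \Gamma^{-1}Sw$. Substituting $\tilde w$ for $\mathbf{w}$ (with $b=0$) yields $\delta^*_{\text{ICL}} = -f_{\mathcal{C}}(x)(\tilde w\tilde w^\top + \lambda I)^{-1}\tilde w$. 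Finally, we use $\tilde w \tilde w^\top = \Gamma^{-1}Sww^\top S\Gamma^{-1}$, which holds because $\Gamma$ and $S$ are symmetric.
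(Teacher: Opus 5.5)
Your proof is correct. For $\lambda>0$ the objective is a strictly convex quadratic in $\delta$, so the first-order condition $(\mathbf{w}\mathbf{w}^\top+\lambda I)\delta=-f(\mathbf{x})\mathbf{w}$ determines the unique minimizer. The paper states this lemma without proof, treating it as the standard closed-form solution and applying it directly to obtain Lemma~\ref{lem:1}, so your derivation is the argument it takes for granted.

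Your Sherman--Morrison reduction to $-\frac{f(\mathbf{x})}{\lambda+\|\mathbf{w}\|^2}\mathbf{w}$ is the same simplification the paper performs, with $\Gamma^{-1}S\mathbf{w}$ in place of $\mathbf{w}$, at the start of the proof of Theorem~\ref{thm:1}. Your effective weight $\Gamma^{-1}S\mathbf{w}$ is also cleaner than the paper's proof of Lemma~\ref{lem:1}. That proof writes $\Gamma^{-1}\bigl(\frac{1}{M}\sum_{i} y_i x_i\bigr)$, which already equals $\Gamma^{-1}S\mathbf{w}$, and then appends a stray extra factor of $w$.
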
 
\begin{proof}[Proof of Lemma~\ref{lem:1}]
Recall that Lemma ~\ref{lem:linear_cost} gives the optimal recourse for a linear scoring function $f(x) = w^\top x + b$ as
\[
\delta_{linear}^* = - f(x) \cdot (w w^\top + \lambda I)^{-1} w,
\]

In the in-context learning setting, the query prediction admits the closed-form solution by \citet{zhang2024trained}:
\[
\hat{y}_{\text{query}} = 
x_{\text{query}}^\top \Gamma^{-1} \Big( \frac{1}{M} \sum_{i=1}^M y_i x_i \Big)w,
\]

Comparing with the linear score $f(x) = w^\top x$, we can identify an effective weight vector for the ICL predictor:
\[
w_{\text{ICL}} = \Gamma^{-1} \Big( \frac{1}{M} \sum_{i=1}^M y_i x_i \Big),
\]
so that $\hat{y}_{\text{query}} = x_{\text{query}}^\top w_{\text{ICL}}$.

Applying Lemma~\ref{lem:linear_cost}  with $w_{\text{ICL}}$ in place of $w$, the optimal recourse becomes
\[
\delta_{\text{ICL}}^* = (s - \hat{y}_{\text{query}}) \cdot 
\big( w_{\text{ICL}} w_{\text{ICL}}^\top + \lambda I \big)^{-1} w_{\text{ICL}}w.
\]

Substituting the expression for $w_{\text{ICL}}$ explicitly in terms of the training prompts and $\Gamma$, we obtain
\[
\delta_{\text{ICL}}^* = 
m
\cdot \Big( 
\Gamma^{-1} \big( \textstyle\frac{1}{M} \sum_{i=1}^M y_i x_i \big) ww^\top
\big( \textstyle\frac{1}{M} \sum_{i=1}^M y_i x_i \big)^\top \Gamma^{-1} + \lambda I 
\Big)^{-1} 
\Gamma^{-1} \big( \textstyle\frac{1}{M} \sum_{i=1}^M y_i x_i \big)w.
\]

\end{proof}

\subsection{Proof of Theorem~\ref{thm:1}}
\begin{lemma}[Gaussian Tail Bound]\label{lem:gaussian_tail}
Let $Z \sim \mathcal{N}(0, \sigma^2)$. Then for any $\delta \in (0,1)$,
\[
\Pr(Z^2 \ge 2\sigma^2 \ln(2/\delta)) \le \delta.
\]
\end{lemma}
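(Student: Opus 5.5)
The statement to prove is Lemma~\ref{lem:gaussian_tail}: for $Z \sim \mathcal{N}(0,\sigma^2)$ and $\delta\in(0,1)$, $\Pr(Z^2 \ge 2\sigma^2\ln(2/\delta)) \le \delta$. The plan is to reduce to a standard normal and apply the Chernoff bound to each tail separately.

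First, write $Z = \sigma G$ with $G \sim \mathcal{N}(0,1)$. Set $t := \sqrt{2\ln(2/\delta)}$; this is well defined and positive because $\delta<1$ gives $2/\delta > 2 > 1$. Then
\[
\Pr\bigl(Z^2 \ge 2\sigma^2\ln(2/\delta)\bigr) = \Pr(|G| \ge t) = 2\Pr(G \ge t),
\]
where the last equality uses the symmetry of $G$. If $\sigma=0$ the threshold is $0$ and the event holds surely, so we assume $\sigma>0$ throughout.

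Second, bound the one-sided tail. For any $s>0$, Markov's inequality applied to $e^{sG}$ gives
\[
\Pr(G\ge t) \le e^{-st}\,\mathbb{E}[e^{sG}] = e^{-st + s^2/2},
\]
using the Gaussian moment generating function $\mathbb{E}[e^{sG}] = e^{s^2/2}$. Choosing $s=t$ minimizes the exponent and yields $\Pr(G\ge t)\le e^{-t^2/2}$.

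Finally, substitute $t^2/2 = \ln(2/\delta)$ to get $\Pr(|G|\ge t) \le 2e^{-\ln(2/\delta)} = 2\cdot\frac{\delta}{2} = \delta$, which is the claim.

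There is no real obstacle. The only points needing care are the choice $s=t$ in the Chernoff step and the degenerate case $\sigma=0$ handled above.
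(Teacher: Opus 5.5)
Your argument is correct for $\sigma>0$. The reduction to $\Pr(|G|\ge t)=2\Pr(G\ge t)$ and the Chernoff bound with $s=t$ give exactly $2e^{-\ln(2/\delta)}=\delta$. The paper gives no proof of this lemma: it is stated as a standard fact in the appendix and invoked once in the proof of Theorem~\ref{thm:1}, applied to $u^\top x$ conditionally on $S$ with variance $u^\top\Lambda u$. Your proposal therefore supplies the standard derivation, and nothing more is needed.

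One correction to your wording: the case $\sigma=0$ is not ``handled'' by your remark. Your own observation shows that the event then has probability $1>\delta$, so the inequality fails there. Hence $\sigma>0$ must be read as an implicit hypothesis of the lemma, as is customary when writing $\mathcal{N}(0,\sigma^2)$, rather than as a case covered by your proof. This does not affect the paper's use of the lemma, since $\Lambda$ is positive definite and $u$ is a unit vector, so $u^\top\Lambda u>0$.
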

\begin{lemma}[Matrix Bernstein Concentration \citep{tropp2015introduction}]\label{lem:matrix_con}
Let $X_i$ be independent, mean-zero, symmetric random matrices with $\|X_i\|_{\rm op} \le R$. Define $S = \sum_i X_i$. Then, for any $t>0$,
\[
\Pr\big(\|S\|_{\rm op} \ge t\big) \le 2d \exp\Big( \frac{-t^2/2}{\sigma^2 + Rt/3}\Big),
\]
where $\sigma^2 = \|\sum_i \mathbb{E}[X_i^2]\|_{\rm op}$.
\end{lemma}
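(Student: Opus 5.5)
The plan is the standard matrix Laplace-transform argument of \citet{tropp2015introduction}, run once for the largest eigenvalue of $S$ and once for that of $-S$. The two tail bounds are then combined with a union bound, which produces the factor $2d$. Throughout, the $X_i$ are $d\times d$ Hermitian matrices and $\lambda_{\max}$ denotes the largest eigenvalue. Since $\|S\|_{\rm op}=\max\{\lambda_{\max}(S),\lambda_{\max}(-S)\}$, and the family $\{-X_i\}$ satisfies the same hypotheses (independent, mean zero, $\|{-X_i}\|_{\rm op}\le R$, same $\sigma^2$), it suffices to prove
\[
\Pr\big(\lambda_{\max}(S)\ge t\big)\le d\exp\Big(\frac{-t^2/2}{\sigma^2+Rt/3}\Big).
\]

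\textbf{Step 1 (matrix Laplace transform).} For every $\theta>0$, Markov's inequality applied to $e^{\theta\lambda_{\max}(S)}=\lambda_{\max}(e^{\theta S})\le \operatorname{tr} e^{\theta S}$ gives
\[
\Pr\big(\lambda_{\max}(S)\ge t\big)\le e^{-\theta t}\,\mathbb{E}\operatorname{tr} e^{\theta S}.
\]

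\textbf{Step 2 (subadditivity of matrix cumulant generating functions).} This is the main obstacle, because $e^{\theta\sum_i X_i}$ does not factor into a product of exponentials. I would invoke Lieb's concavity theorem: for fixed Hermitian $H$, the map $A\mapsto \operatorname{tr}\exp(H+\log A)$ is concave on positive definite matrices. Condition on $X_1,\dots,X_{n-1}$, apply Jensen's inequality in $X_n$, and iterate over $i$. This yields
\[
\mathbb{E}\operatorname{tr} e^{\theta S}\le \operatorname{tr}\exp\Big(\sum_i \log \mathbb{E} e^{\theta X_i}\Big).
\]

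\textbf{Step 3 (scalar Bernstein bound lifted to matrices).} Let
\[
g(\theta)=\frac{\theta^2/2}{1-R\theta/3},\qquad 0<\theta<3/R.
\]
The scalar function $\varphi(x)=(e^{\theta x}-\theta x-1)/x^2$ is increasing, and the Taylor bound $e^{u}-u-1\le \frac{u^2/2}{1-|u|/3}$ for $|u|<3$ gives $\varphi(R)\le g(\theta)/R^2$. Because the eigenvalues of $X_i$ lie in $[-R,R]$, the transfer rule (a scalar inequality on the spectrum passes to a semidefinite inequality) gives
\[
e^{\theta X_i}\preceq I+\theta X_i+\varphi(R)X_i^2\preceq I+\theta X_i+\frac{g(\theta)}{R^2}\,R^2\,\frac{X_i^2}{R^2}.
\]
Taking expectations and using $\mathbb{E}X_i=0$ gives $\mathbb{E}e^{\theta X_i}\preceq I+g(\theta)\,\mathbb{E}X_i^2$. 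Operator monotonicity of the logarithm together with $\log(1+a)\le a$ then gives $\log\mathbb{E}e^{\theta X_i}\preceq g(\theta)\,\mathbb{E}X_i^2$. Since $\operatorname{tr}\exp$ is monotone with respect to the semidefinite order, summing over $i$ yields
\[
\operatorname{tr}\exp\Big(\sum_i\log\mathbb{E}e^{\theta X_i}\Big)\le \operatorname{tr}\exp\Big(g(\theta)\sum_i\mathbb{E}X_i^2\Big)\le d\,e^{g(\theta)\sigma^2}.
\]

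\textbf{Step 4 (optimize over $\theta$).} Combining Steps 1--3 gives $\Pr(\lambda_{\max}(S)\ge t)\le d\exp\big(-\theta t+g(\theta)\sigma^2\big)$. Choose $\theta=t/(\sigma^2+Rt/3)$; this lies in $(0,3/R)$. With this choice the exponent simplifies to at most $-\tfrac{t^2/2}{\sigma^2+Rt/3}$, which is routine algebra. This proves the one-sided bound.

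Applying the same argument to $-S$ and taking a union bound over the two one-sided events gives the stated factor $2d$.
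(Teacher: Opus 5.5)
Your proof is correct and is the standard argument from Tropp's monograph, which is all the paper offers here (it states the lemma with a citation and no proof): the Laplace transform bound, Lieb-based subadditivity, the Bernstein cumulant bound $\log\mathbb{E}e^{\theta X_i}\preceq g(\theta)\mathbb{E}X_i^2$, the choice $\theta=t/(\sigma^2+Rt/3)$, and a union bound over $\pm S$. Two small fixes: with $u=\theta R$ the Taylor bound gives $\varphi(R)\le g(\theta)$, not $g(\theta)/R^2$, so your middle display should end in $I+\theta X_i+g(\theta)X_i^2$, which is what your next line uses; and your $\theta$ lies strictly below $3/R$ only when $\sigma^2>0$, while the case $\sigma^2=0$ is trivial because then every $X_i=0$ almost surely.
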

\begin{proof}[Proof of Theorem~\ref{thm:1}]
We define
\(
S := \frac{1}{M}\sum_{i=1}^M x_i x_i^\top \in \mathbb{R}^{d\times d}.
\).
The expression in lemma~\ref{lem:1} can further be simplified:

\begin{equation*}
    \begin{aligned}
        \delta^* &= \frac{m}{\lambda} \left( I - \frac{\Gamma^{-1}S\mathbf{w} \mathbf{w}^TS\Gamma^{-1}}{\lambda + \|\Gamma^{-1}S\mathbf{w}\|_2^2} \right) \mathbf{w}\\
        &= \frac{m}{\lambda} \left( I \Gamma^{-1}S\mathbf{w} - \Gamma^{-1}S\mathbf{w} \frac{\|\Gamma^{-1}S\mathbf{w}\|_2^2}{\lambda + \|\Gamma^{-1}S\mathbf{w}\|_2^2} \right)\\
        &= \frac{m}{\lambda} \cdot \frac{\lambda}{\lambda + \|\Gamma^{-1}S\mathbf{w}\|_2^2} \cdot \Gamma^{-1}S\mathbf{w} = \frac{m}{\lambda + \|\Gamma^{-1}S\mathbf{w}\|_2^2} \cdot \Gamma^{-1}S\mathbf{w},
    \end{aligned}
\end{equation*}

where \( m := s - f(\mathbf{x}) \). The first equation comes form Sherman-Morrison Formula. Finally, we note that as \(\lambda \to 0\), we have:

\[
\delta^{**} = \frac{m}{\|\Gamma^{-1}S\mathbf{w}\|_2^2} \cdot \Gamma^{-1}S\mathbf{w}.
\]

Thus, by define $u:=\frac{\Gamma^{-1}S\mathbf{w}}{\|\Gamma^{-1}S\mathbf{w}\|}$, we can simplify $\|\delta_{ICL}\|^2$ as: 
\[
\|\delta_{ICL}\|^2=\delta_{ICL}\cdot\delta_{ICL}^\top=\frac{x^\top\Gamma^{-1}S\mathbf{w}\mathbf{w}^\top S\Gamma^{-1}x}{\|\Gamma^{-1}S\mathbf{w}\|_2^2}=(u^\top x)^2.
\]

Conditioned on $\Gamma^{-1}S$ (and hence on $S$), $u^\top x \sim \mathcal{N}(0, u^\top \Lambda u)$, by the standard property of Gaussian vectors. Using the tail bound for a one-dimensional Gaussian by applying lemma~\ref{lem:gaussian_tail} to $u^\top x$ conditioned on $\Gamma^{-1}S$, we get
\[
\Pr\Big( (u^\top x)^2 \le 2 (u^\top \Lambda u) \ln(2/\delta_2) \;\Big|\; \Gamma^{-1}S\Big) \ge 1 - \delta_2.
\]

Next, we want to compute the upper bound of $u^\top \Lambda u$. Applying Lemma~\ref{lem:matrix_con} to $X_i = x_i x_i^\top - \Lambda$ gives
\[
\Pr\Big( \|S -  \Lambda\|_{\rm op} \le t \Big) \ge 1 - \delta_1, \quad t = C_0 \|\Lambda\|_{\rm op} \frac{(\sqrt{M \ln(2d/\delta_1)} + \ln(2d/\delta_1))}{M}\sim C_0 \|\Lambda\|_{\rm op}\sqrt{\frac{ \ln(2d/\delta_1)}{M}}
\]

On the event $\|S -  \Lambda\|_{\rm op} \le t$, we expand
\[
u^\top \Lambda u = \frac{w^\top \Gamma^{-1} ( \Lambda + \Delta) \Lambda ( \Lambda + \Delta) \Gamma^{-1} w}{w^\top \Gamma^{-1} (\Lambda + \Delta)^2 \Gamma^{-1} w}, \quad \Delta := S -  \Lambda.
\]
\textbf{Bound the correction term from $S$.} In the numerator, the leading term is $\Gamma^{-1}\Lambda^3\Gamma^{-1}$, and the cross term is $\Gamma^{-1}\Lambda^2\Delta\Gamma^{-1}$. The ratio of cross term and  leading term is 
\[\frac{\|\Gamma^{-1}\Lambda^2\Delta\Gamma^{-1}\|_{op}}{\|\Gamma^{-1}\Lambda^3\Gamma^{-1}\|_{op}}\leq O(\frac{\|\Delta\|_{op}}{\|\Lambda\|_{op}})\sim O(\frac{C_0 \|\Lambda\|_{\rm op}\sqrt{\frac{ \ln(2d/\delta_1)}{M}}}{\|\Lambda\|_{op}})\sim O(C_0 \sqrt{\frac{\ln(2d/\delta)}{M}}).\]
Similarly, in the denominator, the leading term is $\Gamma^{-1}\Lambda^2\Gamma^{-1}$, and the cross term is $\Gamma^{-1}\Lambda\Delta\Gamma^{-1}$, and the ratio is $O(\frac{1}{M})$.  
Therefore, we have 

\begin{equation*}
\begin{aligned}
u^\top \Lambda u &= \frac{w^\top \Gamma^{-1} ( \Lambda + \Delta) \Lambda ( \Lambda + \Delta) \Gamma^{-1} w}{w^\top \Gamma^{-1} (\Lambda + \Delta)^2 \Gamma^{-1} w} \\
&\leq \frac{w^\top \Gamma^{-1} \Lambda^3 \Gamma^{-1} w}{w^\top \Gamma^{-1} \Lambda^2 \Gamma^{-1}w}+K C_0 \sqrt{\frac{\ln(2d/\delta)}{M}}
\end{aligned}    
\end{equation*}

where $K$ is a constant depending on $\|\Lambda\|_{\rm op}, \|\Lambda w\|$ and $\Gamma^{-1}$.

\textbf{Bound the correction term from $\Gamma^{-1}$.}When $N$ is very large, the cross term and quadratic term are much smaller than the leading term $N^2$, and can be handled using the Neumann series expansion:

\[
(\Gamma^{-1} \Lambda) = \left( \Lambda + \frac{1}{N} (\Lambda + \tau I) \right)^{-1} \approx \Lambda^{-1} - \frac{1}{N} (\Lambda^{-1} + \tau \Lambda^{-2}) + O(1/N^2),
\]

where $\tau = \text{tr}(\Lambda)$. 


Substituting into the numerator and denominator, we get:

\[
\frac{w^\top \Gamma^{-1} \Lambda^3 \Gamma^{-1} w}{w^\top \Gamma^{-1} \Lambda^2 \Gamma^{-1}w} = \frac{w^\top \Lambda w}{\|w\|^2} + \frac{1}{N} \frac{2\tau}{\|w\|^4} \left( (w^\top \Lambda w)(w^\top \Lambda^{-1} w) - \|w\|^4 \right) + O(1/N^2),
\]

Taking the union bound over the events from lemma~\ref{lem:gaussian_tail} ($x$) and lemma~\ref{lem:matrix_con} ($S$) with $\delta_1 = \delta_2 = \delta/2$, we obtain
\[
\Pr\Big( \|\delta_{ICL}\|^2 \le 2 \ln(4/\delta) \big( \mathcal{A} + \frac{1}{N} \mathcal{B} + \mathcal{C}  \sqrt{\frac{\ln(2d/\delta)}{M}} + O(\frac{1}{N^2} + \frac{1}{M}) \big) \Big) \ge 1 - \delta,
\]
which is exactly equation \eqref{eq:upper_bound}
\end{proof}


\section{Zeroth-Order Optimization}
\label{app:zoo}

This section provides additional background on the zeroth-order optimization procedure used as the numerical solver in ASR-ICL.

\paragraph{Zeroth-Order Optimization.}
Recourse generation under in-context learning requires optimizing objectives of the form
\[
\min_{x'} \; L(x,x'),
\]
where the predictor $f_{\mathcal C}$ is induced by a black-box model.
In this setting, gradients of $L$ are not available, and each evaluation requires an expensive forward query to the underlying large language model. As a result, standard gradient-based or differentiable recourse methods are not applicable.

Zeroth-order optimization (ZOO) refers to a class of derivative-free optimization methods that rely only on function evaluations rather than gradient information. Such methods are widely used for black-box and non-convex objectives, and are particularly suitable when the model is only accessible through queries. In ASR-ICL, ZOO is used as an inner solver restricted to a low-dimensional feature subspace, significantly reducing query complexity.

\paragraph{RACOS.} In our implementation, we instantiate the inner zeroth-order solver using RACOS (Randomized Coordinate Shrinking Optimization Strategy) \citep{yu2016derivative, liu2017zoopt}. RACOS is a randomized derivative-free optimizer designed for non-convex optimization over mixed continuous and discrete domains. It iteratively samples candidate solutions, evaluates their objective values, and refines the search distribution toward promising regions of the space.

RACOS is particularly well suited for recourse in tabular domains for two reasons. First, tabular inputs often contain both numerical and categorical attributes. Continuous features can be optimized over bounded intervals, while categorical variables naturally correspond to finite discrete grids. RACOS directly supports such mixed search spaces without requiring relaxation or surrogate gradients. Second, the recourse objective under ICL is typically non-smooth due to discrete feature changes and the implicit nature of prompt-conditioned predictors, making randomized black-box optimization a practical choice.

\paragraph{Feasibility Projection.}
To ensure that suggested recourse actions remain actionable, all candidate solutions produced by the solver are projected onto the feasible set $\Omega(x)$ before evaluation.This projection enforces feature-level constraints such as immutability and monotonicity (e.g., only allowing increases in income or preserving fixed attributes). Therefore, every evaluated recourse instances satisfies the feasibility requirements of algorithmic recourse.

\section{Additional Algorithm Details}
\label{app:algorithm}
For completeness, we provide the full pseudocode of ASR-ICL in Algorithm~\ref{alg:asr-icl}.
\begin{algorithm}[t]
\caption{ASR-ICL: Adaptive Subspace Recourse for In-Context Learning}
\label{alg:asr-icl}
\begin{algorithmic}[1]
\REQUIRE Instance $\mathbf{x}$, objective $L_{\mathrm{pr}}(\mathbf{x},\mathbf{x}')$, feasibility projection $\Pi_{\Omega}(\cdot)$, subspace size $k$, smoothing factor $\alpha$, stopping threshold $\tau$, query budget $B$
\ENSURE Recourse instance $\mathbf{x}^\star$

\STATE Initialize importance scores $I_j \leftarrow 0$ for all mutable features $j$
\STATE Initialize sampling distribution $p_{\mathrm{sel}}(j) = \frac{\exp(I_j)}{\sum_\ell \exp(I_\ell)}$
\STATE $\mathbf{x}^\star \leftarrow \mathbf{x}$, \quad $L_{\mathrm{pr}}^\star \leftarrow L_{\mathrm{pr}}(\mathbf{x},\mathbf{x})$

\WHILE{ $L_{\mathrm{pr}}^\star > \tau$ \AND $B>0$ }

    \STATE Sample a temporary subspace $S_t$ of size $k$ according to $p_{\mathrm{sel}}$

    \STATE Use a zeroth-order optimizer to approximately solve
    \[
    \min_{\mathbf{x}':\, \mathbf{x}'_{\bar S_t} = \mathbf{x}_{\bar S_t}} \; L_{\mathrm{pr}}(\mathbf{x},\mathbf{x}')
    \]
    and obtain a candidate solution $\tilde{\mathbf{x}}$

    \STATE $\tilde{\mathbf{x}} \leftarrow \Pi_{\Omega}(\tilde{\mathbf{x}})$
    \STATE $r_t \leftarrow -L_{\mathrm{pr}}(\mathbf{x},\tilde{\mathbf{x}})$

    \IF{$L_{\mathrm{pr}}(\mathbf{x},\tilde{\mathbf{x}}) < L_{\mathrm{pr}}^\star$}
        \STATE $\mathbf{x}^\star \leftarrow \tilde{\mathbf{x}}$
        \STATE $L_{\mathrm{pr}}^\star \leftarrow L_{\mathrm{pr}}(\mathbf{x},\tilde{\mathbf{x}})$
    \ENDIF

    \FORALL{$j \in S_t$}
        \STATE $I_j \leftarrow (1-\alpha)I_j + \alpha \cdot \frac{r_t}{|S_t|}$
    \ENDFOR

    \STATE Update $p_{\mathrm{sel}}(j) = \frac{\exp(I_j)}{\sum_\ell \exp(I_\ell)}$
    \STATE Decrease $B$ according to the number of model evaluations used

\ENDWHILE

\STATE \textbf{return} $\mathbf{x}^\star$

\end{algorithmic}
\end{algorithm}

\section{Experimental Setup}
\subsection{Datasets}
We evaluate algorithmic recourse under ICL on five real-world tabular classification benchmarks, covering both binary and multi-class decision-making settings. These datasets span high-stakes domains where recourse is particularly relevant, such as credit approval, criminal risk assessment, healthcare prediction, and educational and financial evaluation. For all binary datasets, we generate recourse toward the favorable outcome, i.e., the positive label $y^+=1$. For the multi-class datasets, recourse is generated toward the most favorable class: rating level 2 for Corporate Rating, and grade A for Student Performance.

\paragraph{Australian Credit.} The Australian Credit dataset is a widely used benchmark dataset for binary credit decision-making. The objective is to predict whether a credit card application is approved or rejected. Each instance is described by 14 input features, denoted as $A1$--$A14$, corresponding to anonymized personal and financial attributes. Following the original dataset specification, the feature set includes 6 categorical and 8 numerical variables. The prediction target is a binary label indicating the credit approval outcome. 

\paragraph{COMPAS.} The COMPAS dataset is a widely used benchmark dataset for criminal justice risk assessment. It consists of instances where the objective is to predict whether a defendant will reoffend within a two-year period. In this work, we use the \texttt{two-years-recidividity-no-race} version, which removes the sensitive attribute race. Each instance is described by 12 features capturing demographic information, criminal history, and COMPAS risk scores. The feature set includes 3 binary categorical indicators and 9 numerical attributes. The prediction target is a binary label indicating whether an individual recidivates within two years.

\paragraph{Diabetes.} The Diabetes dataset is a widely used benchmark dataset for binary prediction in healthcare decision-making. It consists of 768 instances, where the objective is to predict whether a patient shows signs of diabetes. Each instance is described by 8 numerical features capturing diagnostic measurements, including pregnancies, glucose level, blood pressure, skin thickness, insulin, body mass index (BMI), diabetes pedigree function, and age. The prediction target is a binary label indicating diabetes onset.

\paragraph{Corporate Rating.} The Corporate Rating dataset is a benchmark dataset for multi-class classification in financial risk assessment. It consists of 2029 corporate credit rating records collected from major rating agencies for publicly traded companies. The objective is to predict a company’s credit rating category based on its financial characteristics. Each instance is described by 30 features. Among them, 25 are numerical financial indicators derived from liquidity, profitability, debt, operating performance, and cash-flow ratios. The dataset also includes several categorical attributes such as the company sector and the issuing rating agency. The prediction target is a multi-class label corresponding to standard corporate credit rating levels (e.g., AAA to D). Due to the imbalance of data labels, we grouped AAA, AA, and A into one category, BBB and BB into another, and the rest into the last category.

\paragraph{Student Performance.} The Student Performance dataset is a benchmark dataset for multi-class classification in educational outcome prediction. It consists of 2392 high school student records, where the objective is to predict academic performance levels based on demographic and behavioral factors. Each instance is described by 13 features capturing academic indicators (e.g., GPA, study time, and absences), as well as background attributes such as age and parental involvement. The prediction target is a multi-class label corresponding to different grade categories (e.g., A--F).

\subsection{Baseline Details}
In this section, we provide additional details on the classical algorithmic recourse baselines used in our experiments. These methods are designed for fixed trained models, and represent widely adopted paradigms in the recourse literature.

Given an instance $x \in \mathbb{R}^d$ with an unfavorable prediction $f(x)=0$, the goal of algorithmic recourse is to compute a recourse instance $x'$ such that $f(x')=1$, while ensuring that the required changes are minimal and actionable.

\subsubsection{DiCE: Diverse Counterfactual Explanations}

DiCE \cite{mothilal2020explaining} is an optimization-based and model-agnostic method for generating a set of diverse counterfactual recourse solutions. Given an input instance $x$, DiCE aims to produce $K$ valid counterfactuals $\{x'_1,\dots,x'_K\}$ that flip the classifier prediction while remaining close to the original instance.

DiCE solves the following objective:
\begin{equation}
\min_{\{x'_k\}_{k=1}^K}
\sum_{k=1}^K \Big[
\mathcal{L}_{\text{cf}}(x'_k)
+ \lambda_1\, d(x,x'_k)
\Big]
- \lambda_2 \det(K),
\end{equation}
where $\mathcal{L}_{\text{cf}}$ is a hinge-style loss encouraging prediction flipping, $d(\cdot,\cdot)$ is a proximity distance normalized by feature-wise MAD statistics, and $\det(K)$ is a determinantal point process (DPP) term promoting diversity among
the generated counterfactuals. DiCE is typically optimized using gradient-based methods such as Adam, and may include an additional sparsity-enhancement post-processing step.

\subsubsection{Actionable Recourse (AR)}

Actionable Recourse (AR) \citep{ustun2019actionable} formulates recourse under linear decision models as a constrained optimization problem over feasible actions. Given a linear classifier $f(x)=\mathrm{sign}(w^\top x+b)$, AR computes the minimum-cost actionable intervention that moves an individual across the decision boundary.

The recourse problem is:
\begin{equation}
\min_{a \in A(x)} c(a)
\quad \text{s.t.} \quad
w^\top (x+a)+b \ge 0,
\end{equation}
where $a$ denotes feature-level actions restricted to an action set $A(x)$, which encodes immutability, discrete changes, and directional constraints. AR is solved via mixed-integer programming (MIP), yielding globally optimal solutions and allowing certification when no feasible recourse exists. The method can additionally generate multiple distinct recourse options (\emph{flipsets}) by enumerating alternative minimal-cost actions.

\subsubsection{FACE: Feasible and Actionable Counterfactual Explanations}

FACE \citep{poyiadzi2020face} is a graph-based recourse method that generates counterfactual explanations by searching along the data manifold. Instead of directly optimizing in feature space, FACE constructs a graph $G=(V,E)$ where nodes correspond to training samples and edges connect nearby instances (e.g., via $k$-NN or $\epsilon$-neighborhood graphs).

Given an instance $x$, FACE identifies a sequence of feasible transitions
\begin{equation}
x \rightarrow x_1 \rightarrow \dots \rightarrow x',
\end{equation}
such that the endpoint $x'$ attains the desired prediction and intermediate steps remain in high-density regions of the data distribution. Edge weights incorporate both distance and density penalties, encouraging shortest paths that avoid low-density (off-manifold) areas. The final counterfactual is obtained by solving:
\begin{equation}
\min_{x' \in \mathcal{X}^+_{t_p,t_d}}
\mathrm{dist}_{G}(x,x'),
\end{equation}
where $\mathcal{X}^+_{t_p,t_d}$ denotes positively classified nodes satisfying confidence and density thresholds. FACE can additionally enforce actionability constraints through condition functions that remove invalid transitions.

\subsection{Evaluation Metrics}
\label{app:metrics}

We provide the formal definitions of the evaluation metrics used in Section~\ref{sec:metric}.

\paragraph{Validity.}
Given a set of instances $\{x_i\}_{i=1}^n$ and their corresponding recourse solutions $\{x_i'\}_{i=1}^n$, validity is defined as
\begin{equation}
\mathrm{Validity} = \frac{1}{n}\sum_{i=1}^n \mathbf{1}\bigl[f_{\mathcal C}(x_i') = y^+\bigr].
\end{equation}

\paragraph{Average Cost.}
Let $\mathcal{J}_c$ denote continuous features and $\mathcal{J}_d$ categorical features. 
We compute the recourse cost as
\begin{equation}
c(x,x') =
\sum_{j\in \mathcal{J}_c}\frac{|x_j-x'_j|}{\sigma_j}
\;+\;
\sum_{j\in \mathcal{J}_d}\mathbf{1}[x_j\neq x'_j],
\end{equation}
where $\sigma_j$ is the standard deviation used for feature normalization.
Avg Cost is then the mean of $c(x_i,x_i')$ over all instances.

\paragraph{Feature Concentration.}
Let $p_{\mathrm{sel}}\in\Delta^d$ denote the learned feature sampling distribution. 
We measure feature concentration as the effective number of active features:
\begin{equation}
\mathrm{FC} = \exp\bigl(H(p_{\mathrm{sel}})\bigr),
\quad
H(p) = -\sum_{j=1}^d p(j)\log p(j).
\end{equation}
Smaller values indicate that the search scope is concentrated on fewer features.

\subsection{Implementation Details}
\label{app:details}

This section provides additional implementation details for reproducing ASR-ICL.

\subsubsection{Recourse Objective and Feasibility Constraints}
All recourse solutions are generated by minimizing the objective in Eq.~\eqref{eq:recourse_obj}:
\[
L(x,x') = (f_{\mathcal C}(x'))^2 + \lambda\, c(x,x'),
\]
subject to feasibility constraints $x' \in \Omega(x)$. 
We enforce $\Omega(x)$ through a feature-wise projection operator $\Pi_{\Omega}$, which preserves immutable features and enforces monotonic constraints when specified. Immutable attributes are not allowed to change, while monotonic features can only increase or decrease.

\paragraph{Australian Credit.}
Since all features are anonymized (A1--A14), we treat them as fully actionable and apply no additional feasibility constraints.

\paragraph{COMPAS.}
We treat \texttt{is\_male} as an immutable attribute and do not impose additional monotonic constraints.

\paragraph{Diabetes.}
We treat \texttt{age} and \texttt{Pregnancies} as immutable features.

\paragraph{Corporate Rating.}
Since the features correspond to business and financial indicators without natural immutability, we assume all attributes are mutable.

\paragraph{Student Performance.}
We treat \texttt{gender}, \texttt{age}, and \texttt{Ethnicity} as immutable, and restrict \texttt{ParentalEducation} to only increase.

\subsubsection{Query Budget and Optimization Settings}
Since $f_{\mathcal C}$ is accessed through expensive black-box queries, ASR-ICL is evaluated under a fixed total query budget $B$ for each instance.  At each iteration, we sample a subspace of size $k$ and solve the corresponding low-dimensional problem using a RACOS-based zeroth-order optimizer. We use early stopping only as an efficiency heuristic: once a valid recourse is found, we terminate if its cost is within a small margin of the best feasible cost observed so far (e.g., within $1.05\times$). All methods are still compared under the same maximum budget $B$; early stopping only avoids unnecessary additional queries once a satisfactory recourse is found.

\begin{table}[h]
\centering
\caption{Default hyperparameters used for ASR-ICL.}
\label{tab:hyperparams}
\begin{tabular}{l c}
\toprule
Hyperparameter & Value \\
\midrule
Subspace size $k$ & $\min(5,\lceil\sqrt{d}\rceil)$ \\
Smoothing factor $\alpha$ & $0.5$ \\
Total query budget $B$ & 150 \\
Zeroth-order solver & RACOS \\
Feature sampling initialization & Uniform ($I_j=0$) \\
\bottomrule
\end{tabular}
\end{table}

\subsubsection{Mixed Feature Types}
Continuous features are optimized over bounded intervals determined by the empirical minimum and maximum values in the dataset. Categorical features are optimized over discrete grids corresponding to their observed category values. RACOS naturally supports such mixed continuous-discrete search spaces without relaxation.

\subsubsection{In-context Prediction Setup}
For each dataset, tabular inputs are formatted as prompts containing $m$ labeled demonstrations sampled uniformly from the training set. All general-purpose LLMs are queried with greedy decoding (temperature $=0$), and predictions are parsed to obtain class labels.

\subsubsection{Recourse Evaluation Pipeline}
For each dataset, we first obtain predictions from the model $f_{\mathcal C}$ on all test instances. Recourse is then generated only for individuals receiving an unfavorable outcome, i.e., those for which $f_{\mathcal C}(x)\neq y^+$, where $y^+$ denotes the desired target label defined in Section~\ref{sec:preliminaries}. In binary classification tasks, $y^+$ corresponds to the favorable class.  In multi-class datasets, we fix a reference favorable class as the recourse target. For each selected instance, ASR-ICL produces a feasible modification $x'$ that aims to flip the prediction to $y^+$ while minimizing the recourse cost.

\section{Prompt Design}
\label{app:prompt}

Figure~\ref{fig:prompt-template} shows the prompt template used for tabular in-context prediction. Each query consists of an instruction header, a set of $m$ labeled demonstrations, and the query instance with the label left blank for completion. The model is required to output exactly one class label.
\begin{figure}[t]
\centering
\fbox{
\begin{minipage}{0.95\linewidth}
\small
\textbf{Instruction Prompt.} \\
The task is to provide your best estimate for \textit{Label}. Answer with only one of the candidate class labels. Provide the label only, without any additional text.

\vspace{0.75em}
\textbf{Context.}
\begin{quote}\small
Feature$_1$: value \\
Feature$_2$: value \\
$\vdots$ \\
Feature$_d$: value \\
Label: $y_i$
\end{quote}

\vspace{0.5em}
\textbf{Query.}
\begin{quote}\small
Feature$_1$: value \\
Feature$_2$: value \\
$\vdots$ \\
Feature$_d$: value \\
Label:
\end{quote}

\vspace{0.5em}
\end{minipage}
}
\caption{Prompt template used for model. Each query consists of an instruction prompt, $m$ labeled demonstrations, and a query instance.}
\label{fig:prompt-template}
\end{figure}

\section{Additional Experimental Results}
\subsection{Budget and Query Efficiency}
\label{app:budget}
\begin{table*}[t]
\centering
\caption{Query budget statistics (mean $\pm$ std) for ASR-ICL and the full-space zeroth-order baseline (Full ZO) on one binary (Australian Credit) and one multi-class (Corporate Rating) dataset.}
\label{tab:budget_full_appendix}
\setlength{\tabcolsep}{6pt}
\resizebox{0.98\textwidth}{!}{%
\begin{tabular}{llcccccc}
\toprule
\multirow{2}{*}{Method} & \multirow{2}{*}{Model} 
& \multicolumn{3}{c}{Australian Credit} 
& \multicolumn{3}{c}{Corporate Rating} \\
\cmidrule(lr){3-5}\cmidrule(lr){6-8}
& & Validity & Avg Cost & Budget & Validity & Avg Cost & Budget \\
\midrule
\multirow{7}{*}{Full ZO}
& TabPFN-2.5         & $1.00\pm0.00$ & $12.88\pm0.20$ & $144.28\pm14.23$ & $1.00\pm0.00$ & $15.44\pm0.69$ & $149.45\pm0.95$ \\
& TabICL             & $1.00\pm0.00$ & $12.56\pm0.44$ & $150.00\pm0.00$  & $0.99\pm0.02$ & $18.65\pm1.37$ & $150.00\pm0.00$ \\
& Qwen3-4B-Instruct   & $1.00\pm0.00$ & $12.03\pm0.22$ & $150.00\pm0.00$  & $0.95\pm0.06$ & $17.42\pm1.70$ & $150.00\pm0.00$ \\
& Qwen3-8B            & $1.00\pm0.00$ & $12.94\pm0.97$ & $137.67\pm10.79$ & $0.53\pm0.11$ & $20.89\pm0.55$ & $150.00\pm0.00$ \\
& LLaMA-3.2-3B        & $1.00\pm0.00$ & $14.06\pm1.72$ & $150.00\pm0.00$  & $0.72\pm0.02$ & $18.67\pm0.38$ & $150.00\pm0.00$ \\
& LLaMA-3.1-8B        & $1.00\pm0.00$ & $13.00\pm0.33$ & $150.00\pm0.00$  & $0.26\pm0.45$ & $18.73\pm0.58$ & $150.00\pm0.00$ \\
& GPT-4o              & $0.97\pm0.04$ & $11.82\pm0.71$ & $150.00\pm0.00$  & $0.77\pm0.03$ & $14.18\pm0.57$ & $150.00\pm0.00$ \\
\midrule
\multirow{7}{*}{ASR-ICL}
& TabPFN-2.5         & $1.00\pm0.00$ & $3.83\pm0.04$  & $27.01\pm4.85$   & $0.98\pm0.02$ & $4.79\pm0.19$  & $111.71\pm15.85$ \\
& TabICL             & $1.00\pm0.00$ & $4.47\pm0.15$  & $36.29\pm10.11$  & $0.99\pm0.02$ & $5.79\pm0.25$  & $104.60\pm10.75$ \\
& Qwen3-4B-Instruct   & $0.79\pm0.15$ & $3.01\pm0.06$  & $56.01\pm13.11$  & $0.94\pm0.04$ & $3.54\pm0.06$  & $117.32\pm10.34$ \\
& Qwen3-8B            & $0.87\pm0.08$ & $2.94\pm0.15$  & $44.71\pm8.04$   & $0.73\pm0.04$ & $3.55\pm0.09$  & $105.00\pm18.09$ \\
& LLaMA-3.2-3B        & $1.00\pm0.00$ & $2.99\pm0.07$  & $72.38\pm10.33$  & $0.95\pm0.07$ & $3.01\pm0.10$  & $84.44\pm9.49$ \\
& LLaMA-3.1-8B        & $0.80\pm0.08$ & $2.96\pm0.05$  & $61.70\pm19.70$  & $0.50\pm0.24$ & $3.15\pm0.11$  & $69.04\pm6.65$ \\
& GPT-4o              & $0.99\pm0.03$ & $4.75\pm0.09$  & $49.04\pm1.18$      & $0.72\pm0.07$ & $4.87\pm0.18$  & $40.29\pm19.40$ \\
\bottomrule
\end{tabular}
}
\end{table*}

Table~\ref{tab:budget_full_appendix} provides complete recourse validity, cost, and query budget statistics across all evaluated ICL predictors on one representative binary dataset (Australian Credit) and one multi-class dataset (Corporate Rating). Consistent with the main findings, ASR-ICL achieves comparable recourse validity while requiring substantially fewer model queries than the full-space zeroth-order baseline. On the binary task, ASR-ICL typically finds feasible recourse within $20$--$70$ queries, compared to the full budget of $150$ queries for Full ZO. Importantly, ASR-ICL also yields significantly lower recourse costs (around $3$--$5$) than the full-space baseline (around $12$--$14$), indicating that the adaptive subspace search promotes more actionable and sparse modifications. On the multi-class task, ASR-ICL similarly reduces query budgets, requiring roughly $40$--$120$ queries across models, while the full-space baseline consistently reaches the maximum budget limit. Recourse costs are again substantially smaller under ASR-ICL, remaining in the range of $3$--$6$ across predictors. These results further confirm that ASR-ICL improves both efficiency and actionability under black-box ICL predictors.

\subsection{Feature Concentration Analysis}
\label{app:feature_concentration}
We further analyze how ASR-ICL concentrates its search on a small subset of influential features. Recall that ASR-ICL maintains a learned feature selection distribution $p_{\mathrm{sel}}$ over mutable attributes. We measure feature concentration using $\exp(H(p_{\mathrm{sel}}))$, where lower values indicate that the sampler focuses on fewer features. Figures~\ref{fig:feat_conc_binary} and~\ref{fig:feat_conc_multi} report the final concentration values at termination under a fixed 32-shot setting. Across both binary and multi-class tasks, ASR-ICL consistently produces substantially more concentrated feature distributions than full-space zeroth-order optimization, supporting the intended adaptive subspace mechanism.

\begin{figure}[t]
    \centering
    \includegraphics[width=0.9\linewidth]{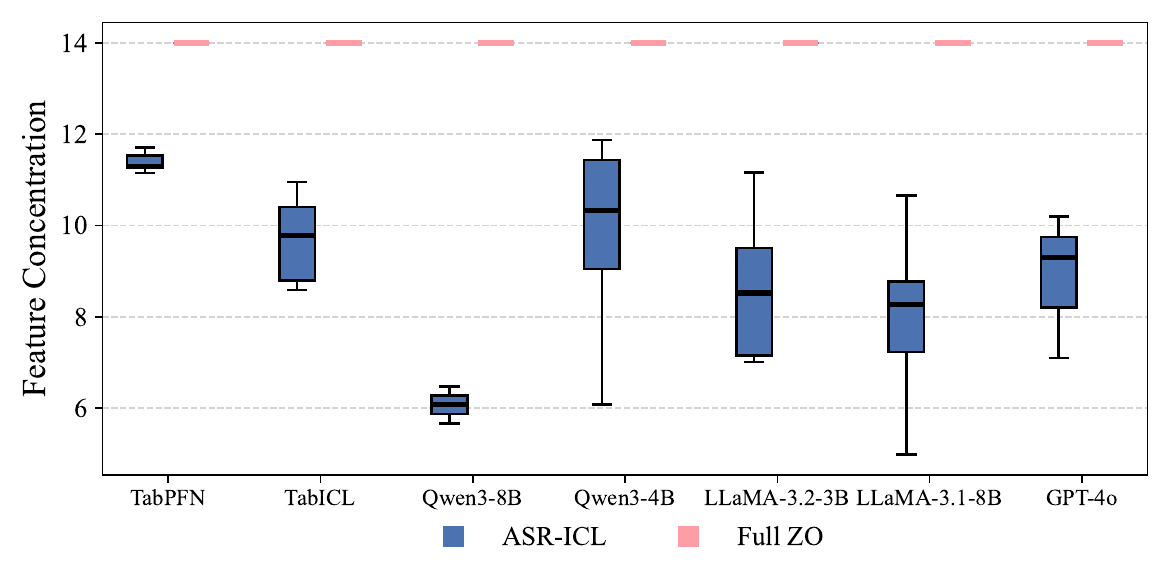}
    \caption{
    Final feature concentration on Australian Credit (32 shots). ASR-ICL yields lower effective feature counts, indicating stronger focus on a small subset of important feature compared to Full ZO.
    }
    \label{fig:feat_conc_binary}
\end{figure}

\begin{figure}[t]
    \centering
    \includegraphics[width=0.9\linewidth]{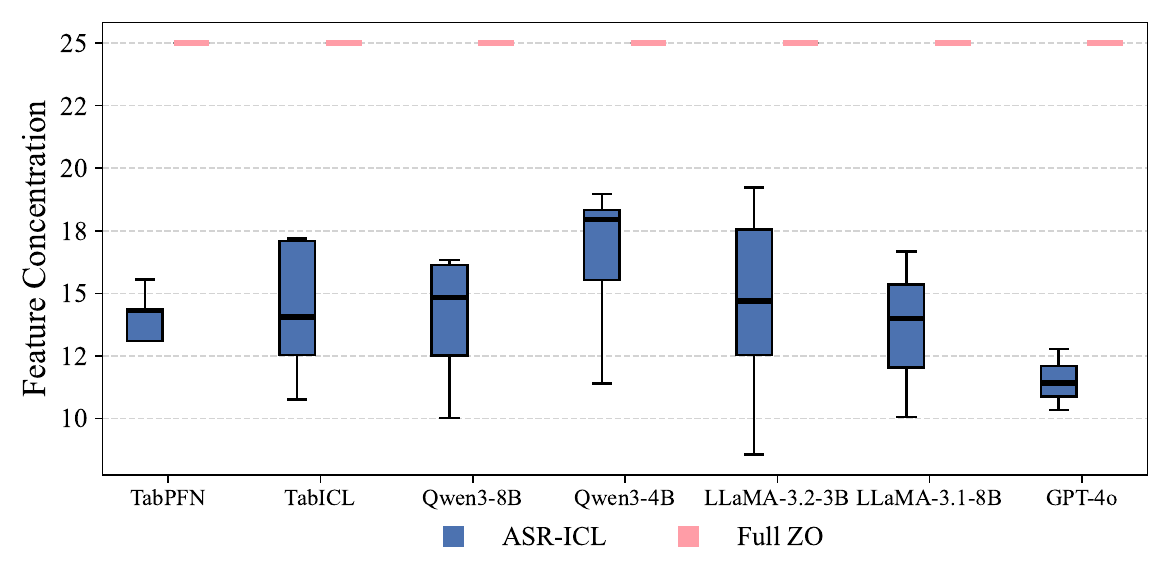}
    \caption{
    Final feature concentration on Corporate Rating (48 shots). ASR-ICL remains consistently more focused than full-space baselines, demonstrating that adaptive subspace search extends naturally beyond binary settings.
    }
    \label{fig:feat_conc_multi}
\end{figure}

\subsection{Sensitive Analysis}
\label{app:sensitivity}

\begin{figure}[t]
    \centering
    \includegraphics[width=0.98\linewidth]{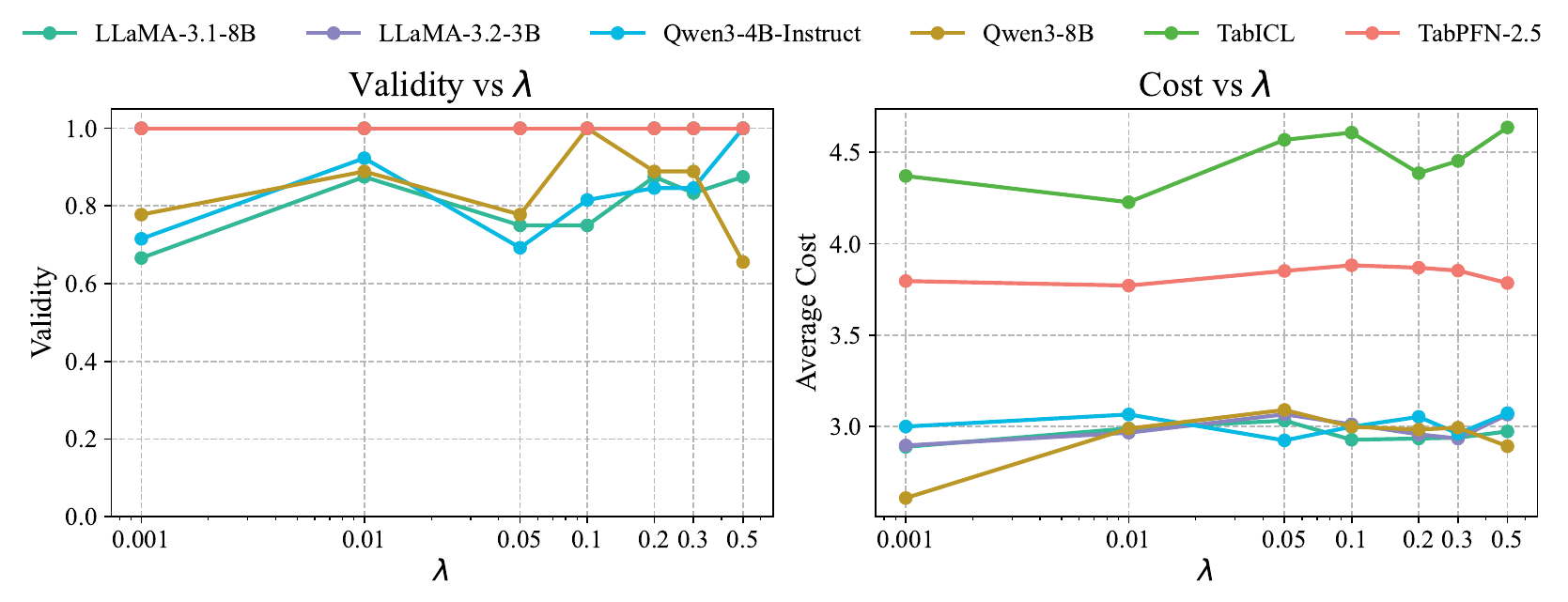}
    \caption{
    Sensitivity of ASR-ICL to the trade-off parameter $\lambda$ on Australian Credit across different models. 
    }
    \label{fig:lambda_sensitivity}
\end{figure}

\paragraph{Sensitivity to the trade-off parameter $\lambda$.}
Figure~\ref{fig:lambda_sensitivity} reports the effect of the trade-off parameter $\lambda$ in Eq.~(9), which balances validity and proximity. Across a wide range of values ($10^{-3}$ to $0.5$), ASR-ICL maintains consistently high recourse validity for most ICL predictors, with only minor fluctuations. At the same time, the average recourse cost varies smoothly and remains within a narrow band, indicating that the method does not require delicate tuning of $\lambda$. In all experiments, we therefore fix $\lambda=0.1$ as a default choice, which provides a strong validity--cost trade-off across models.

\paragraph{Sensitivity to subspace size $k$.}
Figure~\ref{fig:k_sensitivity} evaluates the effect of the subspace dimension $k$ in ASR-ICL on a representative models (Qwen3-4B-Instruct). When $k$ is very small, recourse validity is noticeably reduced, since too few actionable features are available to construct a successful modification. As $k$ increases, validity rapidly improves and stabilizes around $k=3$--$5$, indicating that only a small number of features is typically sufficient for recourse. At the same time, the average cost grows with larger $k$, reflecting the increased flexibility of modifying more features and the higher complexity of the inner zeroth-order search. Overall, these results suggest that moderate subspace sizes provide the best validity--efficiency trade-off. Accordingly, we adopt the default choice $k=\min(5,\lceil\sqrt{d}\rceil)$ throughout our experiments, which consistently yields stable performance without tuning.
\begin{figure}[t]
    \centering
    \includegraphics[width=0.5\linewidth]{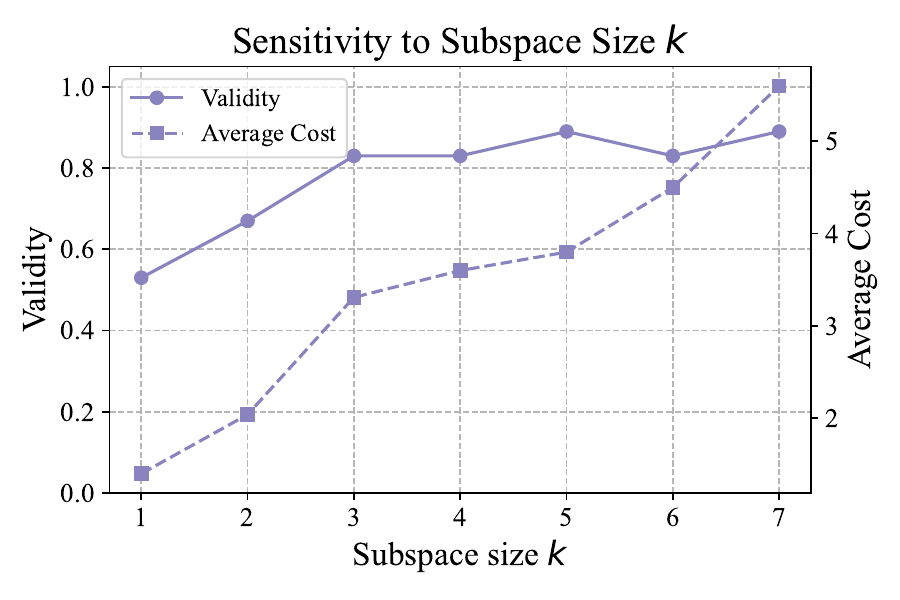}
    \caption{Sensitivity of ASR-ICL to the subspace size $k$ on a representative model.}
    \label{fig:k_sensitivity}
\end{figure}

\subsection{Robustness Analysis}
We conduct additional experiments under a fixed 32-shot setting to evaluate robustness to context distribution and ordering (as shown in Table~\ref{tab:context_robustness}). Overall, ASR-ICL remains stable across most settings. For TabPFN, validity is consistently high, with only moderate degradation under strong class imbalance. For LLM-based models (e.g., Qwen3-4B-Instruct), we observe higher sensitivity to ordering in some cases (e.g., validity drops under sorted contexts), indicating model-dependent prompt sensitivity. These results suggest that while context composition can influence the induced decision rule, ASR-ICL remains robust under a wide range of realistic conditions.

\begin{table}[t]
\centering
\caption{Robustness to context distribution and ordering under the fixed 32-shot setting.}
\label{tab:context_robustness}
\resizebox{\linewidth}{!}{
\begin{tabular}{llccccc}
\toprule
Dataset & Condition & Demo Dist. & TabPFN Validity & TabPFN Cost & Qwen Validity & Qwen Cost \\
\midrule
COMPAS & balanced & \{0:16, 1:16\} & 1.00 & 3.15 & 1.00 & 2.99 \\
COMPAS & imbal\_0dom & \{0:24, 1:8\} & 1.00 & 3.15 & 1.00 & 3.06 \\
COMPAS & imbal\_1dom & \{1:24, 0:8\} & 1.00 & 3.11 & 1.00 & 3.37 \\
COMPAS & sorted\_0first & \{0:16, 1:16\} & 1.00 & 3.18 & 1.00 & 3.05 \\
COMPAS & sorted\_1first & \{1:16, 0:16\} & 1.00 & 3.20 & 0.50 & 3.28 \\
\midrule
Australian Credit & balanced & \{0:16, 1:16\} & 1.00 & 3.47 & 0.79 & 3.61 \\
Australian Credit & imbal\_0dom & \{0:24, 1:8\} & 0.81 & 3.92 & 0.70 & 3.38 \\
Australian Credit & imbal\_1dom & \{1:24, 0:8\} & 1.00 & 3.58 & 0.77 & 3.45 \\
Australian Credit & sorted\_0first & \{0:16, 1:16\} & 0.94 & 3.44 & 0.79 & 3.47 \\
Australian Credit & sorted\_1first & \{1:16, 0:16\} & 1.00 & 3.68 & 0.70 & 3.54 \\
\bottomrule
\end{tabular}
}
\end{table}

\subsection{Sensitivity to different target in multi-class recourse}
We run additional experiments by varying the target class (as shown in Table~\ref{tab:target_sensitivity}). For TabPFN, both validity and cost are stable across targets. For Qwen3-4B-Instruct, validity varies more (0.58–0.91), while cost remains largely unchanged. This suggests that the variation mainly reflects differences in target difficulty rather than instability of the method. 

\begin{table}[t]
\centering
\caption{Sensitivity to target class in multi-class recourse on Corporate Rating.}
\label{tab:target_sensitivity}
\begin{tabular}{lccc}
\toprule
Model & Target & Validity & Cost \\
\midrule
\multirow{3}{*}{TabPFN-2.5}
& 0 & $0.87 \pm 0.04$ & $5.00 \pm 0.16$ \\
& 1 & $0.95 \pm 0.02$ & $5.04 \pm 0.11$ \\
& 2 & $0.96 \pm 0.02$ & $4.90 \pm 0.21$ \\
\midrule
\multirow{3}{*}{Qwen3-4B-Instruct}
& 0 & $0.58 \pm 0.18$ & $3.74 \pm 0.53$ \\
& 1 & $0.85 \pm 0.13$ & $3.80 \pm 0.07$ \\
& 2 & $0.91 \pm 0.06$ & $3.87 \pm 0.13$ \\
\bottomrule
\end{tabular}
\end{table}

\section{Case Study}
\label{app:case_study}
Table~\ref{tab:diabetes_case_study} presents a representative recourse example on a real instance from the Diabetes dataset. In contrast, ASR-ICL achieves recourse by modifying only two key actionable attributes (Glucose and BMI), yielding a sparser and more interpretable recommendation.

\begin{table}[t]
\centering
\caption{Case study on a real instance from the Diabetes dataset (Outcome=1). We compare recourse suggestions from classical baselines, Full ZO and ASR-ICL. Bold entries indicate modified features. We treat \textit{Age} and \textit{Pregnancies} as immutable features.}
\label{tab:diabetes_case_study}
\resizebox{0.9\linewidth}{!}{
\begin{tabular}{lcccccc}
\toprule
Feature & Original & DiCE & AR & FACE & Full ZO & ASR-ICL \\
\midrule
Pregnancies & 6 & \textbf{3} & 6 & 6 & 6 & 6 \\
Glucose & 148 & \textbf{138} & \textbf{136} & \textbf{134} & \textbf{135} & \textbf{130} \\
BloodPressure & 72 & \textbf{69} & 72 & \textbf{68} & \textbf{68} & 72 \\
SkinThickness & 35 & \textbf{33} & 35 & \textbf{30} & \textbf{28} & 35 \\
Insulin & 0 & 0 & 0 & \textbf{60} & \textbf{60} & 0 \\
BMI & 33.6 & \textbf{33.2} & \textbf{31.0} & \textbf{30.5} & \textbf{30.8} & \textbf{30.5} \\
DiabetesPedigreeFunction & 0.627 & 0.627 & 0.627 & 0.627 & \textbf{0.672} & 0.627 \\
Age & 50 & \textbf{35} & 50 & 50 & 50 & 50 \\
\midrule
Changed features & -- & 5 & 2 & 5 & 6 & 2 \\
Actionable & -- & No & Yes & Yes & Yes & Yes \\
Outcome flipped & -- & Yes & Yes & Yes & Yes & Yes \\
\bottomrule
\end{tabular}
}
\vspace{0.5ex}
\end{table}

\end{document}